  \providecommand\BibTeX{{%
    \normalfont B\kern-0.5em{\scshape i\kern-0.25em b}\kern-0.8em\TeX}}}
\begin{document}

\title{Multiple Kernel Clustering with Dual Noise Minimization}



\author{Junpu Zhang}
\authornote{Both authors contributed equally to this research.}
\email{zhangjunpu@nudt.edu.cn}
\author{Liang Li}
\authornotemark[1]
\email{liangli@nudt.edu.cn}
\affiliation{%
 \institution{National University of Defense Technology}
 \city{Changsha}
 \country{China}
}

\author{Siwei Wang}
\email{wangsiwei13@nudt.edu.cn}
\affiliation{%
 \institution{National University of Defense Technology}
 \city{Changsha}
 \country{China}
}

\author{Jiyuan Liu}
\email{liujiyuan13@nudt.edu.cn}
\affiliation{%
 \institution{National University of Defense Technology}
 \city{Changsha}
 \country{China}
}

\author{Yue Liu}
\email{yueliu@nudt.edu.cn}
\affiliation{%
 \institution{National University of Defense Technology}
 \city{Changsha}
 \country{China}
}

\author{Xinwang Liu}
\authornote{Corresponding author.}
\email{xinwangliu@nudt.edu.cn}
\affiliation{%
 \institution{National University of Defense Technology}
 \city{Changsha}
 \country{China}
}

\author{En Zhu}
\email{enzhu@nudt.edu.cn}
\authornotemark[2]
\affiliation{%
 \institution{National University of Defense Technology}
 \city{Changsha}
 \country{China}
}

\renewcommand{\shortauthors}{Junpu Zhang and Liang Li, et al.}

\begin{abstract}
Clustering is a representative unsupervised method widely applied in multi-modal and multi-view scenarios. Multiple kernel clustering (MKC) aims to group data by integrating complementary information from base kernels. As a representative, late fusion MKC first decomposes the kernels into orthogonal partition matrices, then learns a consensus one from them, achieving promising performance recently. However, these methods fail to consider the noise inside the partition matrix, preventing further improvement of clustering performance. We discover that the noise can be disassembled into separable dual parts, i.e. N-noise and C-noise (Null space noise and Column space noise). In this paper, we rigorously define dual noise and propose a novel parameter-free MKC algorithm by minimizing them. To solve the resultant optimization problem, we design an efficient two-step iterative strategy. To our best knowledge, it is the first time to investigate dual noise within the partition in the kernel space. We observe that dual noise will pollute the block diagonal structures and incur the degeneration of clustering performance, and C-noise exhibits stronger destruction than N-noise. Owing to our efficient mechanism to minimize dual noise, the proposed algorithm surpasses the recent methods by large margins.
\end{abstract}

\begin{CCSXML}
<ccs2012>
   <concept>
       <concept_id>10010147.10010257.10010258.10010260.10003697</concept_id>
       <concept_desc>Computing methodologies~Cluster analysis</concept_desc>
       <concept_significance>500</concept_significance>
       </concept>
   <concept>
       <concept_id>10003752.10010070.10010071.10010074</concept_id>
       <concept_desc>Theory of computation~Unsupervised learning and clustering</concept_desc>
       <concept_significance>500</concept_significance>
       </concept>
 </ccs2012>
\end{CCSXML}

\ccsdesc[500]{Computing methodologies~Cluster analysis}
\ccsdesc[500]{Theory of computation~Unsupervised learning and clustering}
\keywords{Multi-view clustering, multiple kernel clustering, noise minimization}


\maketitle

\section{Introduction}
Clustering is a representative unsupervised learning method widely applied in data mining, community detection and many other machine learning scenarios \cite{nie2014clustering,ren2020self,nie2016constrained,liu2022simple,ren2020deep,zhang2017latent,xu2021multi}. Multi-view or multi-modal clustering aims to optimally fuse diverse and complementary information, which has been a hotpot in current research \cite{chang2015convex,zhan2017graph,wen2018incomplete,zhan2018graph,kang2019robust,huang2020auto,huang2021cdd,9646486,wang2022align}. As Figure \ref{MVC} shows, how to effectively and efficiently integrate multimedia or multiple features, e.g. image, video, and text, is still an open question \cite{jing2013dictionary,nie2017multi,jing2006ontology,liu2022efficient,zhang2020deep,liu2022improved,wen2021cdimc,wang2022highly}. 
Multiple kernel clustering (MKC) \cite{zhao2009multiple,zhang2010general,kang2017kernel,huang2019auto,kang2019low,ren2021multiple} is a popular technique to solve this. Considering the insufficiency to tackle nonlinearly separable data in sample space, MKC maps the sample space to a Reproducing Kernel Hilbert Space (RKHS), where the data can be linearly separable \cite{shawe2004kernel}. Currently, there are two mainstream methods, including kernel fusion and late fusion strategies.

\begin{figure}[t]
\begin{center}{
		\centering
		\includegraphics[width=0.475\textwidth]{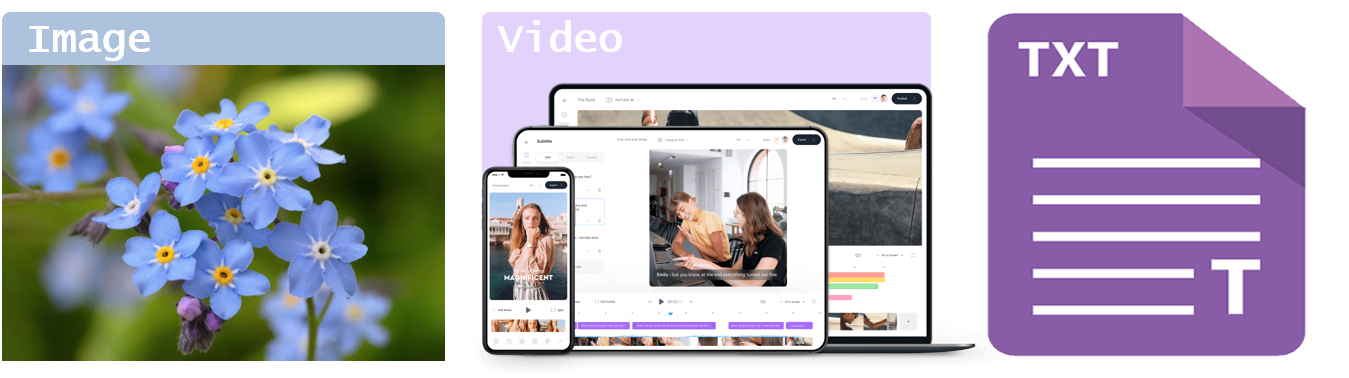}
		\vspace{-5pt}
		\caption{Multi-view learning aims to fuse data across multimedia or multiple features, e.g. features can be extract from image, video, text and other multi-modal representations.}
		\label{MVC}}
\end{center}
\vspace{-15pt}
\end{figure}
\begin{figure}[!t]
\begin{center}{
		\centering
            \subfloat[\scriptsize Original kernel (ACC: 41.77\%)] {\includegraphics[width=0.2\textwidth] {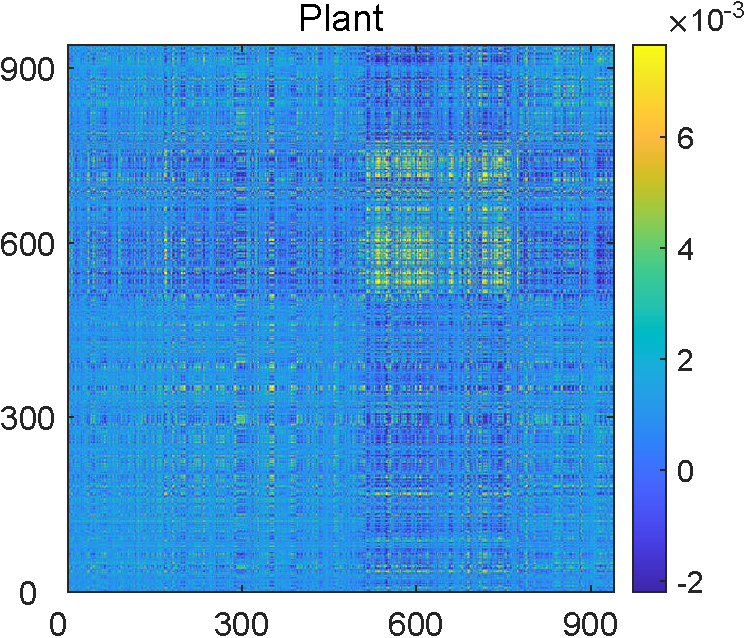}} \hspace{5mm}
            \subfloat[\scriptsize Remove N-noise (ACC: 68.29\%)] {{\includegraphics[width=0.2\textwidth]{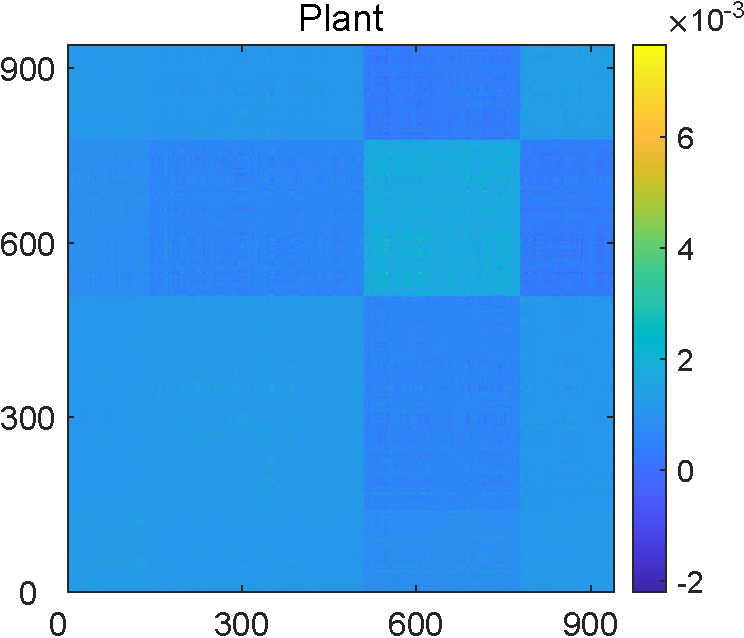}}}\\
    		\vspace{-5pt}
            \subfloat[\scriptsize {Remove C-noise (ACC: 97.62\%)}] {{\includegraphics[width=0.2\textwidth]{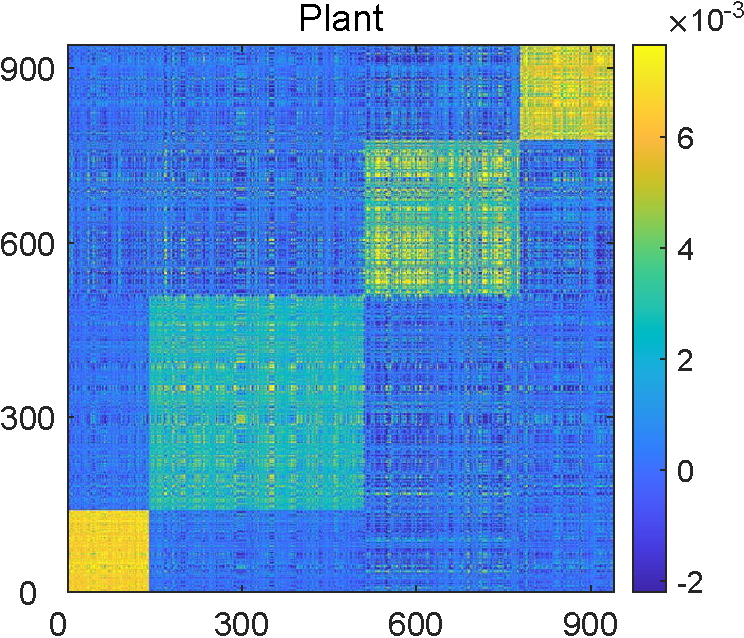}}}\hspace{5mm}
            \subfloat[\scriptsize Remove dual noise (ACC: 100\%)] {{\includegraphics[width=0.2\textwidth]{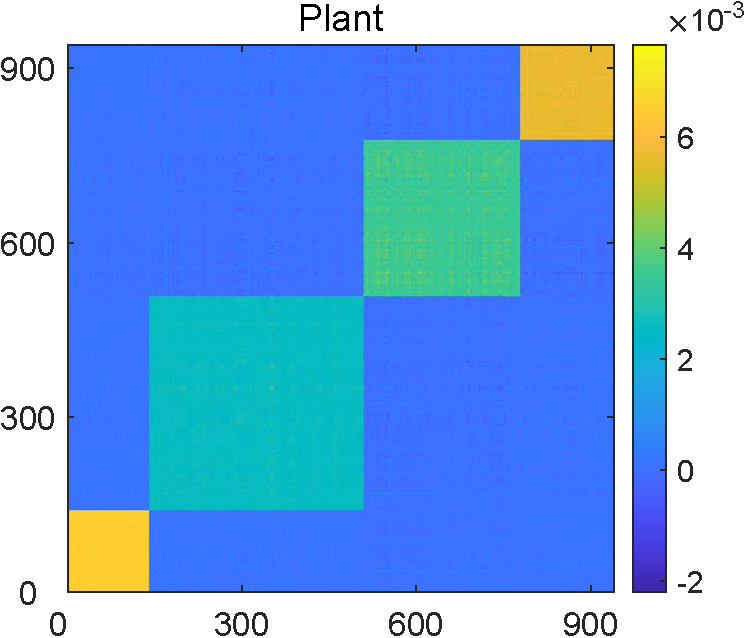}}}
    		\vspace{-5pt}
			\caption{Visualization of noise destruction on Plant dataset.}
            \label{Visualization_plant}
}
\end{center}
\vspace{-15pt}
\end{figure}
Kernel fusion methods focus on learning a consensus kernel from base kernels directly, afterwards compute the final partition (cluster soft-assignment) \cite{dhillon2004kernel}. 
A typical paradigm is multiple kernel $k$-means (MKKM) \cite{huang2012multiple}.
Meanwhile, plenty of variants are derived \cite{liu2016multiple,kang2017twin,liu2017optimal,kang2019similarity,liang2022LSWMKC}. For the purpose of directly serving for clustering tasks, late fusion methods aim to obtain a consensus partition from base partitions. This strategy is proposed by \cite{wang2019multi} and inspires a large number of researches \cite{liu2018late,liu2021hierarchical,liu2021one,wang2021late,9653838}. Our proposed algorithm belongs to the second category.

Although the late fusion methods exhibit promising performance, most existing researches \cite{wang2019multi,liu2021one,wang2021late,9653838} encounter three issues: (i) These models adopt a coarse manner that directly fuses the pre-computed kernel partitions without updating during the optimization. 
Consequently, the quality of consensus partition is greatly limited by the initial partitions and leads to limited clustering performance. 
The work in \cite{liu2021hierarchical} attempts to tackle this issue by updating partitions in a hierarchical manner. 
However, along with the improvement of clustering performance, it introduces a great complexity in the optimization. 
(ii) Moreover, most late fusion based methods are modeled with one or more hyper-parameters, which is intractable in real-world scenarios due to the missing of supervisory signals. 
(iii) Most critically, existing researches fail to consider the noise within partition matrices.
In clustering settings, researchers always prefer a clear block diagonal structure. 
However, as Figure \ref{Visualization_plant} (a) shows, the noise will inevitably corrupt the block diagonal structure, leading to the degeneration of clustering performance. 
Overall, an efficient, parameter-free model which can effectively minimize the impact of noise is an urgent need in multiple kernel clustering applications.

To fill these gaps, this paper develops a novel MKC algorithm with a dual noise minimization mechanism (MKC-DNM). Specifically, we discover that the noise, according to its mathematical property, can be disassembled into two separate dual parts, i.e. Null space noise (N-noise) and Column space noise (C-noise). As Figure \ref{Visualization_plant} shows, we visualize the effect of dual noise on Plant dataset. 
Specifically, we test the kernel quality (the accuracy of kernel $k$-means) in four comparative settings, i.e. without modification, removing N-noise, removing C-noise and removing both of them.
It can be observed that the accuracy increases from 41.77\% to 68.29\% and 97.62\% with removing N-noise and C-noise, respectively.
The phenomenon illustrates (i) both of the dual noise will pollute the kernel, leading to the degeneration of clustering performance; (ii) C-noise exhibits stronger destruction than N-noise on the block diagonal structure. 
Therefore, a natural motivation of this work is to minimize dual noise.
This paper firstly provides rigorous mathematical definitions of dual noise, then carefully explores their properties, and further proposes a unified and elegant paradigm to minimize them simultaneously. 
The contributions of this work are summarized as follows: 
\begin{enumerate}[1)]
\item In MKC scenarios, for the first time, we mathematically disassemble the noise of kernel partition into N-noise and C-noise, distinguishing our work from existing researches. Furthermore, we find that C-noise exhibits stronger destruction than N-noise on the block diagonal structures, which directly leads to the degeneration of clustering performance.
\item We propose a novel model to minimize dual noise in late fusion framework. Most importantly, our model is parameter-free, making it practical, especially in unsupervised scenarios.
\item We propose an efficient two-step alternative optimization strategy to solve our model with linear computation complexity, and achieve state-of-art clustering performance on benchmark datasets.
\end{enumerate}


\section{Related Work} \label{Se_Related_Work}
\subsection{Multiple Kernel $k$-means}
Considering a data matrix ${\mathbf{X} \in \mathbb{R}^{d \times n}}$ drawn from $k$ clusters where ${d}$ and $n$ refer to the feature dimension and sample number respectively, $k$-means aims to minimize the inter-cluster loss \cite{jing2007entropy,nie2011spectral,wen2020adaptive,zhang2018generalized},
\begin{equation}\label{KKM}
\begin{split}
\min_{\mathbf{Y}} \; \sum_{i=1}^{n}\sum_{r=1}^{k}\|\mathbf{x}_{i}-\mathbf{c}_{r}\|_{2}^{2}\mathbf{Y}_{ir}, \;\; \textrm{s.t.} \; \sum_{r=1}^{k} \mathbf{Y}_{ir}=1,
\end{split}
\end{equation}
where $\mathbf{Y}\in\{0,1\}^{n \times k}$ is the indicator matrix, $n_{r}=\sum_{i=1}^{n} \mathbf{Y}_{ir}$ is the sample number of the $r$-th cluster whose centroid is $\mathbf{c}_{r}$.

With kernel trick \cite{shawe2004kernel}, i.e. $\mathbf{K}_{ij}={\kappa}\left(\mathbf{x}_{i}, \mathbf{x}_{j}\right)=\phi\left(\mathbf{x}_{i}\right)^{\top} \phi\left(\mathbf{x}_{j}\right)$, the sample space $\mathbb{R}^{d}$ can be mapped into an RKHS $\mathcal{H}$ \cite{tzortzis2008global}, in which ${\kappa} \left(\cdot,\cdot\right)$ is the kernel function, ${\phi(\cdot)}$ is nonlinear feature mapping. Kernel $k$-means (KKM) is transformed to
\begin{equation}\label{KKM-matrix2}
\begin{split}
\min_{\mathbf{H}} \; \mathrm{Tr}\left(\left(\mathbf{I}-\mathbf{H} \mathbf{H}^{\top}\right)\mathbf{K}\right), \;\; \textrm {s.t.} \; \mathbf{H}^{\top} \mathbf{H}=\mathbf{I}_{k}, \;\mathbf{H} \in \mathbb{R}^{n \times k},
\end{split}
\end{equation}
where the partition matrix $\mathbf{H}$ is computed by eigenvalue decomposition.
The final cluster labels can be obtained by performing $k$-means on $\mathbf{H}$ \cite{ren2021multiple}.

In multiple kernel scenarios, the consensus kernel is commonly assumed as a combination of $m$ base kernels.
As a representative, the objective of MKKM \cite{huang2012multiple} is 
\begin{equation}\label{MKKM}
\begin{split}
\min _{\mathbf{H}, \boldsymbol{\beta}}& \;\; \operatorname{Tr}\left(\left(\mathbf{I}-\mathbf{H H}^{\top}\right)\mathbf{K}_{\boldsymbol{\beta}}\right), \\ 
\textrm {s.t.}&  \;\; \mathbf{H}^{\top} \mathbf{H}= \mathbf{I}_{k}, \;\mathbf{H} \in \mathbb{R}^{n \times k}, \; \boldsymbol{\beta} ^\top \mathbf{1} = 1, \;\beta_{p} \geq 0, \;\forall p,
\end{split}
\end{equation}
where $\mathbf{K}_{\boldsymbol{\beta}} = \sum_{p=1}^{m} \beta_{p}^{2} \mathbf{K}_{p}$ is the consensus kernel and $\beta_{p}$ is the weight of the $p$-th kernel. In the optimization, $\mathbf{H}$ and $\boldsymbol{\beta}$ can be solved alternatively.
\subsection{Late Fusion Multiple Kernel Clustering}
Instead of fusing consensus kernel from base kernels $\mathbf{K}_{p}$, late fusion MKC focuses on fusing multiple partitions $\mathbf{H}_{p}$ to directly serve for clustering \cite{wang2019multi}. The paradigm of late fusion MKC (LFMKC) is presented as follows:
\begin{equation}\label{Late_MKC}
\begin{split}
\max_{\mathbf{H^{\ast}},\mathbf{H_{\boldsymbol{\beta}}}, \boldsymbol{\beta}} \;\;& \operatorname{Tr}\left(\mathbf{H}^{\ast\top} \mathbf{H}_{\boldsymbol{\beta}} + \lambda \mathbf{H}^{\ast\top} \mathbf{M} \right), \\ 
\textrm {s.t.} \;\;\;& \mathbf{H}^{\ast\top} \mathbf{H}^{\ast}=\mathbf{I}_{k},\;\mathbf{H}^{\ast} \in \mathbb{R}^{n \times k},\\
&\mathbf{W}_{p}^{\top} \mathbf{W}_{p}=\mathbf{I}_{k},\;\mathbf{W}_{p} \in \mathbb{R}^{k \times k},\\
&\|\boldsymbol{\beta}\|^{2}_{2} = 1, \beta_{p} \geq 0, \forall p,
\end{split}
\end{equation}
where $\mathbf{H}^{\ast}$ is the consensus partition, $\mathbf{H}_{\boldsymbol{\beta}} = \sum_{p=1}^{m} \beta_{p} \mathbf{H}_{p} \mathbf{W}_{p}$ is the fused partition with each base one aligned by permutation matrix $\mathbf{W} _p$, $\mathbf{M}$ is the partition of average kernel, and $\lambda$ is a trade-off parameter.

The above paradigm aims to maximally align the consensus partition and base partitions. Although achieving promising performance, the consensus partition, as pointed in \cite{liu2021hierarchical}, is directly learned from base partitions that are fixed during the optimization, limiting its performance. 
Moreover, the current method neglects the noise in kernel partitions.

\subsection{Hierarchical Multiple Kernel Clustering}
To update the kernel partition during optimization, \cite{liu2021hierarchical} proposes to gradually category clusters from $\mathbf{K}_{p} \in \mathbb{R}^{n \times n}$ to intermediate $\mathbf{H} \in \mathbb{R}^{n \times c}$ and finally to $\mathbf{H}^{\ast} \in \mathbb{R}^{n \times k}$. The idea is formulated as 
\begin{equation}\label{HMKC}
\begin{split}
\max_{\mathbf{H^{\ast}},\mathbf{H}_{p}, \boldsymbol{\omega}, \boldsymbol{\beta}} \;\; &\sum_{t=1}^{s} \sum_{p=1}^{m} \omega_{p}^{(t)} \operatorname{Tr} \left( \mathbf{K}_{p}^{(t)} \mathbf{K}_{p}^{(t-1)} \right) + \sum_{p=1}^{m} \beta_{p} \operatorname{Tr}\left( \mathbf{K}_{p}^{(s)} \mathbf{K}^{\ast} \right),\\
\textrm {s.t.} \;\;\; &\mathbf{H}^{\ast\top} \mathbf{H}^{\ast} = \mathbf{I}_{k}, \; \mathbf{H}_{p}^{(t)\top} \mathbf{H}_{p}^{(t)}=\mathbf{I}_{c_t}, \; \mathbf{H}^{\ast} \in \mathbb{R}^{n \times k}, \\ &\mathbf{H}_{p}^{(t)} \in \mathbb{R}^{n \times c_t}, \; n > c_1 > \dots > c_s > k, \\
&\|\boldsymbol{\omega}^{(t)}\|^{2}_{2} = 1, \; \omega_{p}^{(t)} \geq 0, \; \|\boldsymbol{\beta}\|^{2}_{2} = 1, \; \beta_{p} \geq 0, 
\end{split}
\end{equation}
where $\mathbf{K}_{p}^{(t)} = \mathbf{H}_{p}^{(t)} {\mathbf{H}_{p}^{(t)\top}}$ for $t \geq 1$, $\mathbf{K}_{p}^{(0)} = \mathbf{K}_{p}$, $\mathbf{K}^{\ast} = \mathbf{H}^{\ast} \mathbf{H}^{\ast \top}$, and $\mathbf{H}_{p}^{(t)}$ is the intermediary partitions with decreasing sizes. The complex formulation conveys a straightforward insight that the data representation should be extracted step by step.
In this way, the data information beneficial to clustering could be maximally preserved.

Obviously, each term of Eq. (\ref{HMKC}) is a kernel $k$-means objective in essence. 
Also, this is an empirical and coarse manner.
Nevertheless, the sizes of intermediary partitions are still hyper-parameters that require carefully tuning or grid-searching in practice. 
Most importantly, it fails to tackle the noise inside the partition matrices.

\begin{figure*}[t]
\begin{center}{
		\centering
        {{\includegraphics[width=0.95\textwidth]{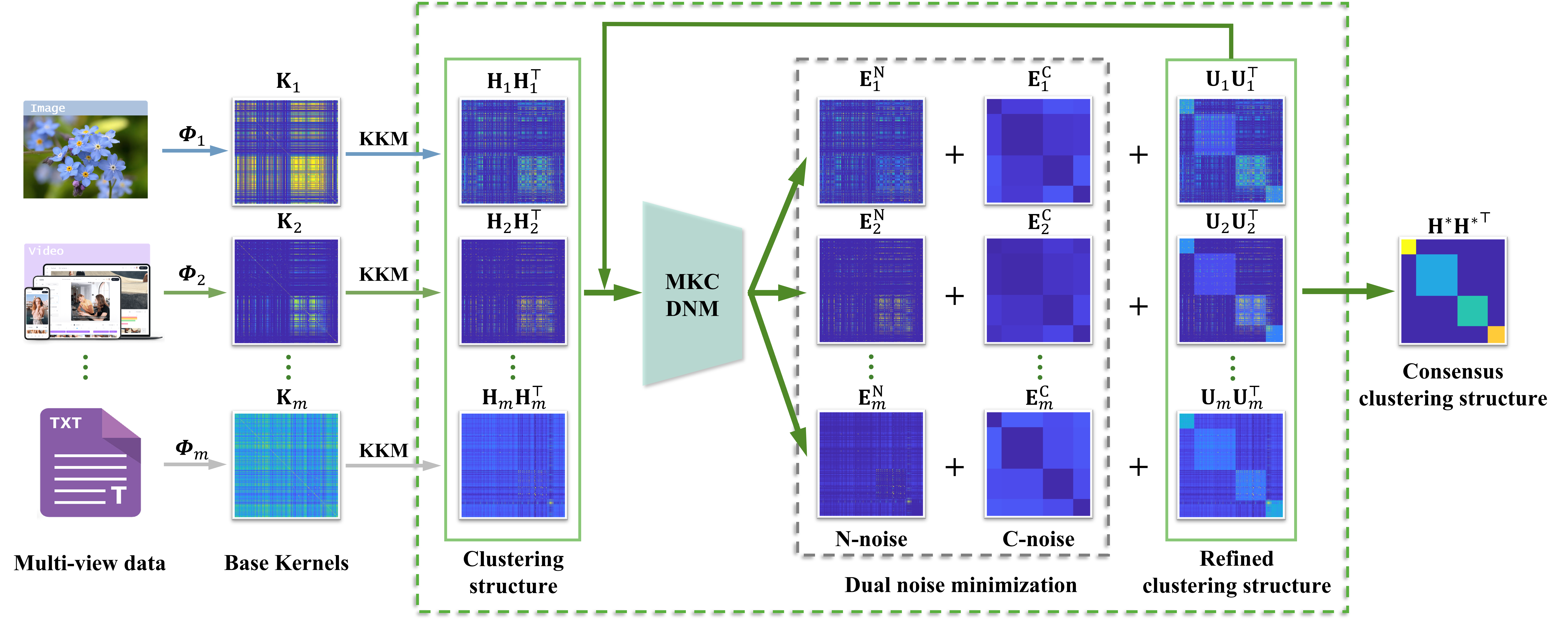}}}
		\caption{The framework of the proposed MKC-DNM model. The core idea is to adaptively optimize base partitions by minimizing dual noise during the iteration. Specifically, the MKC-DNM model firstly performs kernel $k$-means and generate $\mathbf{H}_{p}\mathbf{H}_{p}^{\top}$ to recover the clustering structures of base kernels $\mathbf{K}_{p}$, and then gradually remove N-noise ($\mathbf{E}_{p}^\mathrm{N}$) and C-noise ($\mathbf{E}_{p}^\mathrm{C}$) of $\mathbf{H}_{p}$, preserving the denoised feature matrices $\mathbf{U}_{p}$. Consequently, a consensus partition $\mathbf{H}^{\ast}$ with precise block diagonal structures is obtained.
		} 
		\label{framework}}
\end{center}
\end{figure*}
\section{Methodology}\label{Se_Methodology}
\subsection{Motivation}
In multiple kernel scenarios, a ${d_p}$-dimensional feature matrix $\mathbf{U}_p \in \mathbb{R}^{n \times d_{p}}$ is commonly served as the data representation of the $p$-th kernel computed by singular value decomposition (SVD), which satisfies $\mathbf{U}_p ^\top \mathbf{U}_p = \mathbf{I}_{d_p}$. 
It's worth noting that the dimension $d_{p}$ of feature matrices may vary at a large range since multiple kernels are naturally discrepant and complementary. Consequently, it is necessary to fuse a consensus optimal partition ${\mathbf{H}^\ast} \in \mathbb{R}^{n \times k}$ across multiple $\mathbf{U}_p$ for clustering purpose, i.e. 
\begin{equation}\label{eq:f}
\begin{split}
f\left(\{\mathbf{U}_p\}_{p=1}^m\right)=\mathbf{H}^\ast,\;\;\textrm {s.t.} \;{\mathbf{H}^\ast} ^\top \mathbf{H}^\ast = \mathbf{I}_{k},
\end{split}
\end{equation} 
where $f$ is the function to fuse $m$ feature matrices.
\subsubsection{Definitions of dual noise}
Directly integrating feature matrices $\mathbf{U}_p$ across $m$ kernels is a challenging issue since their dimensions $d_{p}$ varies greatly. 
Fortunately, both $\mathbf{U}_p \mathbf{U}_p^\top$ and $\mathbf{H}^\ast {\mathbf{H}^\ast}^\top$ share the target clustering structure of $\mathbf{K}_{p}$ but with a discrepancy $\mathbf{E}_p$, i.e. 
\begin{equation}\label{eq:E}
\begin{split}
\mathbf{U}_p \mathbf{U}_p^\top = \mathbf{H}^\ast {\mathbf{H}^\ast}^\top + \mathbf{E}_p,\;\; \forall p \in \Delta_m,
\end{split}
\end{equation}
where $\Delta_m = \{1,2,\cdots,m\}$, and $\mathbf{E}_p \in \mathbb{R}^{n \times n}$ can be regarded as the noise matrix. Mathematically, $\mathbf{E}_p$ can be further separated into Null space noise (N-noise, $\mathbf{E}_p ^\mathrm{N}$) and Column space noise (C-noise, $\mathbf{E}_p ^\mathrm{N}$), i.e.  
\begin{equation}\label{eq:ENEC}
\begin{split}
\mathbf{E}_p = \mathbf{E}_p ^\mathrm{N} + \mathbf{E}_p ^\mathrm{C},\;\; \forall p \in \Delta_m.
\end{split}
\end{equation}

We emphasize that the above definitions are derived based on which subspace their eigenvectors belong to, i.e. $v\left(\mathbf{E}_p^\mathrm{N}\right) \in N\left({\mathbf{H}^\ast}^\top\right)$, $v\left(\mathbf{E}_p ^\mathrm{C}\right) \in C\left(\mathbf{H}^\ast\right)$, where $v\left(\mathbf{A}\right)$ denotes the eigenvectors with corresponding non-zero eigenvalues of matrix $\mathbf{A}$, $N\left(\mathbf{B}\right)$ and $C\left(\mathbf{B}\right)$ denote the Null space and the Column space of matrix $\mathbf{B}$, respectively. Mathematically, as pointed in \cite{greub2012linear}, $v\left(\mathbf{E}_p^\mathrm{N}\right)$, $v\left(\mathbf{E}_p ^\mathrm{C}\right)$ can be computed by     
\begin{align}
&v\left(\mathbf{E}_p ^\mathrm{N}\right) \in N\left({\mathbf{H}^\ast}^\top\right) = \left\{ \mathbf{x} \;|\; {\mathbf{H}^\ast}^\top \mathbf{x} = \mathbf{0}\right\},\;\; \forall p \in \Delta_m, \label{eq:EN}\\
&v\left(\mathbf{E}_p ^\mathrm{C}\right) \in C\left(\mathbf{H}^\ast\right) = \left\{ \mathbf{y} \;|\; \exists \; \mathbf{x}, \; \mathrm{s.t.}\; \mathbf{y} = \mathbf{H}^\ast \mathbf{x}\right\},\;\; \forall p \in \Delta_m. \label{eq:EC}
\end{align}

According to \cite{greub2012linear}, the null space of ${\mathbf{H}^\ast}^\top$ is the orthogonal complement of the column space of $\mathbf{H}^\ast$, which demonstrates that given $\mathbf{E}_p$, dual noise matrices $\mathbf{E}_p ^\mathrm{N}$ and $\mathbf{E}_p ^\mathrm{C}$ exist and should be unique. 
Figure \ref{Visualization_plant} gives a visualization of destruction caused by dual noise in kernel space. 
\subsubsection{Properties of dual noise}
Before introducing the proposed noise minimization mechanism, we first give several vital mathematical properties of $\mathbf{E}_p ^\mathrm{N}$ and $\mathbf{E}_p ^\mathrm{C}$ in Lemma \ref{lemma1}-\ref{lemma3}.

\begin{lemma}\label{lemma1}
$\mathrm{Tr} \left( \mathbf{E}_p ^\mathrm{N} \mathbf{H}^\ast {\mathbf{H}^\ast}^\top \right) = 0$,\;\; $\forall p \in \Delta_m$.
\end{lemma}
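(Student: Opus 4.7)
The plan is to prove the identity directly from the definition of $\mathbf{E}_p^\mathrm{N}$ given in Eq.~(\ref{eq:EN}), combined with the cyclic property of the trace. First I would rewrite the trace by cycling $\mathbf{H}^\ast$ from the right to the left:
\begin{equation*}
\mathrm{Tr}\!\left(\mathbf{E}_p^\mathrm{N} \mathbf{H}^\ast {\mathbf{H}^\ast}^\top\right) = \mathrm{Tr}\!\left({\mathbf{H}^\ast}^\top \mathbf{E}_p^\mathrm{N} \mathbf{H}^\ast\right).
\end{equation*}
This reduces the claim to showing that the symmetric $k \times k$ matrix ${\mathbf{H}^\ast}^\top \mathbf{E}_p^\mathrm{N} \mathbf{H}^\ast$ has zero trace; in fact, I will argue the stronger statement that ${\mathbf{H}^\ast}^\top \mathbf{E}_p^\mathrm{N} = \mathbf{0}$.

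Next I would invoke the spectral decomposition of $\mathbf{E}_p^\mathrm{N}$. Since $\mathbf{E}_p^\mathrm{N}$ is symmetric (being the restriction of the symmetric discrepancy matrix $\mathbf{E}_p$ to its null-space component), it admits a decomposition $\mathbf{E}_p^\mathrm{N} = \sum_i \lambda_i \mathbf{v}_i \mathbf{v}_i^\top$ where the $\mathbf{v}_i$ are unit eigenvectors associated with the nonzero eigenvalues $\lambda_i$. By the definition in Eq.~(\ref{eq:EN}), each such $\mathbf{v}_i$ lies in $N({\mathbf{H}^\ast}^\top)$, so ${\mathbf{H}^\ast}^\top \mathbf{v}_i = \mathbf{0}$.

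Substituting the decomposition then gives
\begin{equation*}
{\mathbf{H}^\ast}^\top \mathbf{E}_p^\mathrm{N} = \sum_i \lambda_i \bigl({\mathbf{H}^\ast}^\top \mathbf{v}_i\bigr) \mathbf{v}_i^\top = \mathbf{0},
\end{equation*}
and hence ${\mathbf{H}^\ast}^\top \mathbf{E}_p^\mathrm{N} \mathbf{H}^\ast = \mathbf{0}$, whose trace is $0$. Since $p \in \Delta_m$ was arbitrary, the conclusion follows.

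There is really no serious obstacle here; the lemma is essentially an unpacking of the defining condition $v(\mathbf{E}_p^\mathrm{N}) \in N({\mathbf{H}^\ast}^\top)$. The only mild subtlety worth flagging in the write-up is the symmetry of $\mathbf{E}_p^\mathrm{N}$, which justifies using an eigenbasis rather than a general Jordan form; this is immediate from Eq.~(\ref{eq:E}) since $\mathbf{U}_p \mathbf{U}_p^\top - \mathbf{H}^\ast {\mathbf{H}^\ast}^\top$ is symmetric, and the two orthogonal spectral components $\mathbf{E}_p^\mathrm{N}, \mathbf{E}_p^\mathrm{C}$ inherit this symmetry.
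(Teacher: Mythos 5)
Your proof is correct and follows essentially the same route as the paper's: both rest on the spectral decomposition $\mathbf{E}_p^\mathrm{N} = \sum_i \lambda_i \mathbf{v}_i \mathbf{v}_i^\top$ together with the defining property that each eigenvector with nonzero eigenvalue lies in $N\left({\mathbf{H}^\ast}^\top\right)$, so that ${\mathbf{H}^\ast}^\top \mathbf{v}_i = \mathbf{0}$ kills every term. Your cyclic-trace reformulation and the explicit remark on the symmetry of $\mathbf{E}_p^\mathrm{N}$ are minor presentational additions, not a different argument.
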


\begin{lemma}\label{lemma2}
$\mathrm{Tr} \left( \mathbf{E}_p ^\mathrm{C} \mathbf{H}^\ast {\mathbf{H}^\ast} ^\top \right) = \mathrm{Tr}\left( \mathbf{E}_p ^\mathrm{C}\right)$,\;\; $\forall p \in \Delta_m$.
\end{lemma}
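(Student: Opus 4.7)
The plan is to exploit the fact that $\mathbf{P} := \mathbf{H}^\ast {\mathbf{H}^\ast}^\top$ is the orthogonal projector onto the column space $C(\mathbf{H}^\ast)$, which follows immediately from ${\mathbf{H}^\ast}^\top \mathbf{H}^\ast = \mathbf{I}_k$. By definition~(\ref{eq:EC}), every non-trivial eigenvector of $\mathbf{E}_p^\mathrm{C}$ lies in $C(\mathbf{H}^\ast)$, so $\mathbf{P}$ acts as the identity on the range of $\mathbf{E}_p^\mathrm{C}$. Hence the cleaner operator identity $\mathbf{P}\,\mathbf{E}_p^\mathrm{C} = \mathbf{E}_p^\mathrm{C} = \mathbf{E}_p^\mathrm{C}\,\mathbf{P}$ should hold, from which the claim falls out by cyclicity of trace.

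Concretely, I would first justify that $\mathbf{E}_p^\mathrm{C}$ admits a spectral decomposition $\mathbf{E}_p^\mathrm{C} = \sum_i \lambda_i \mathbf{v}_i \mathbf{v}_i^\top$ with $\{\mathbf{v}_i\}\subset C(\mathbf{H}^\ast)$ orthonormal and $\lambda_i \neq 0$; the zero-eigenvalue directions contribute nothing to either side of the identity. Since each $\mathbf{v}_i \in C(\mathbf{H}^\ast)$ satisfies $\mathbf{P}\mathbf{v}_i = \mathbf{v}_i$, I would then compute
\begin{equation*}
\mathrm{Tr}\left(\mathbf{E}_p^\mathrm{C}\,\mathbf{P}\right)
= \sum_i \lambda_i\, \mathrm{Tr}\!\left(\mathbf{v}_i \mathbf{v}_i^\top \mathbf{P}\right)
= \sum_i \lambda_i\, \mathbf{v}_i^\top \mathbf{P} \mathbf{v}_i
= \sum_i \lambda_i\, \mathbf{v}_i^\top \mathbf{v}_i
= \sum_i \lambda_i
= \mathrm{Tr}\left(\mathbf{E}_p^\mathrm{C}\right),
\end{equation*}
which is the stated identity. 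Structurally this mirrors the proof of Lemma~\ref{lemma1}, the only difference being that there $\mathbf{v}_i \in N({\mathbf{H}^\ast}^\top)$ forces $\mathbf{P}\mathbf{v}_i = \mathbf{0}$, collapsing the sum to zero, whereas here $\mathbf{P}\mathbf{v}_i = \mathbf{v}_i$ preserves every term.

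The main obstacle is not the computation but the linear-algebraic bookkeeping that makes the spectral argument legitimate: one must verify that $\mathbf{E}_p^\mathrm{C}$ is symmetric (so the real spectral theorem supplies an orthonormal eigenbasis) and that its range is actually contained in $C(\mathbf{H}^\ast)$, not merely that its eigenvectors lie there. Symmetry of $\mathbf{E}_p = \mathbf{U}_p \mathbf{U}_p^\top - \mathbf{H}^\ast {\mathbf{H}^\ast}^\top$ is clear, and the splitting of Eq.~(\ref{eq:ENEC}) along the mutually orthogonal subspaces $N({\mathbf{H}^\ast}^\top)$ and $C(\mathbf{H}^\ast)$ preserves symmetry componentwise; this is the content of the remark following Eq.~(\ref{eq:EC}) invoking \cite{greub2012linear}. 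Once these two points are nailed down, the trace identity is a one-line consequence of $\mathbf{P}\,\mathbf{E}_p^\mathrm{C} = \mathbf{E}_p^\mathrm{C}$ combined with $\mathrm{Tr}(\mathbf{AB})=\mathrm{Tr}(\mathbf{BA})$.
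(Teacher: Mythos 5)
Your proposal is correct and follows essentially the same route as the paper: a spectral decomposition $\mathbf{E}_p^\mathrm{C} = \sum_i \lambda_i \mathbf{v}_i \mathbf{v}_i^\top$ combined with the fact that each $\mathbf{v}_i \in C(\mathbf{H}^\ast)$, so that $\mathbf{H}^\ast{\mathbf{H}^\ast}^\top$ fixes every $\mathbf{v}_i$ and the trace collapses to $\sum_i \lambda_i$. The paper phrases this by writing $\mathbf{v}_i = \mathbf{H}^\ast \mathbf{x}_i$ with $\mathbf{x}_i$ a unit vector and computing ${\mathbf{H}^\ast}^\top \mathbf{v}_i = \mathbf{x}_i$, which is the same observation you package as the projector identity $\mathbf{P}\mathbf{v}_i = \mathbf{v}_i$.
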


\begin{lemma}\label{lemma3}
$\mathbf{E}_p ^\mathrm{N}$ is positive semi-definite (PSD) and $\mathbf{E}_p ^\mathrm{C}$ is negative semi-definite (NSD).
\end{lemma}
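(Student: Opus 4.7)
The plan is to exploit the fact that, because $\mathbf{U}_p^\top \mathbf{U}_p = \mathbf{I}_{d_p}$ and ${\mathbf{H}^\ast}^\top \mathbf{H}^\ast = \mathbf{I}_k$, both $\mathbf{U}_p \mathbf{U}_p^\top$ and $\mathbf{H}^\ast {\mathbf{H}^\ast}^\top$ are orthogonal projectors on $\mathbb{R}^n$. In particular, $\mathbf{0} \preceq \mathbf{U}_p \mathbf{U}_p^\top \preceq \mathbf{I}_n$ in the Loewner order, while $\mathbf{H}^\ast {\mathbf{H}^\ast}^\top$ fixes every vector in $C(\mathbf{H}^\ast)$ and annihilates every vector in $N({\mathbf{H}^\ast}^\top)$. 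Substituting into Eq.~(\ref{eq:E}) then gives the clean quadratic identity $z^\top \mathbf{E}_p z = \|\mathbf{U}_p^\top z\|^2 - \|{\mathbf{H}^\ast}^\top z\|^2$ for every $z \in \mathbb{R}^n$, which will be the workhorse of the argument.

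Next I would translate the eigenvector conditions in Eqs.~(\ref{eq:EN})--(\ref{eq:EC}) into operator-level statements. Since $\mathbf{E}_p^\mathrm{N}$ is symmetric with range contained in $N({\mathbf{H}^\ast}^\top)$, its kernel contains the orthogonal complement $C(\mathbf{H}^\ast)$; symmetrically $\mathbf{E}_p^\mathrm{C}$ annihilates $N({\mathbf{H}^\ast}^\top)$. Decomposing an arbitrary test vector as $z = z_C + z_N$ with $z_C \in C(\mathbf{H}^\ast)$ and $z_N \in N({\mathbf{H}^\ast}^\top)$, the cross terms in the two quadratic forms vanish (because $\mathbf{E}_p^\mathrm{N} z_C = \mathbf{0}$ and $\mathbf{E}_p^\mathrm{N} z_N \in N({\mathbf{H}^\ast}^\top)$ is orthogonal to $z_C$), so that $z^\top \mathbf{E}_p^\mathrm{N} z = z_N^\top \mathbf{E}_p^\mathrm{N} z_N$ and dually $z^\top \mathbf{E}_p^\mathrm{C} z = z_C^\top \mathbf{E}_p^\mathrm{C} z_C$.

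The PSD and NSD claims now collapse to a one-line computation on each subspace. On $N({\mathbf{H}^\ast}^\top)$ the component $\mathbf{E}_p^\mathrm{C}$ also vanishes, hence $z_N^\top \mathbf{E}_p^\mathrm{N} z_N = z_N^\top \mathbf{E}_p z_N = \|\mathbf{U}_p^\top z_N\|^2 \geq 0$, giving $\mathbf{E}_p^\mathrm{N} \succeq \mathbf{0}$. Dually, on $C(\mathbf{H}^\ast)$ the component $\mathbf{E}_p^\mathrm{N}$ vanishes and ${\mathbf{H}^\ast}^\top z_C$ has norm $\|z_C\|$, so $z_C^\top \mathbf{E}_p^\mathrm{C} z_C = z_C^\top \mathbf{E}_p z_C = \|\mathbf{U}_p^\top z_C\|^2 - \|z_C\|^2 \leq 0$ because $\mathbf{U}_p \mathbf{U}_p^\top \preceq \mathbf{I}_n$, giving $\mathbf{E}_p^\mathrm{C} \preceq \mathbf{0}$.

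The only delicate point, and the main obstacle I anticipate in writing a fully rigorous version, is justifying the bridge from ``eigenvectors lie in a subspace'' to ``the matrix annihilates the orthogonal complement of that subspace''. This implicitly uses the symmetry of $\mathbf{E}_p^\mathrm{N}$ and $\mathbf{E}_p^\mathrm{C}$, which is natural in the spectral language of Eqs.~(\ref{eq:EN})--(\ref{eq:EC}) but deserves to be stated explicitly; once it is in place, the remainder of the proof is just the elementary fact that an orthogonal projector lies between $\mathbf{0}$ and $\mathbf{I}_n$ in the Loewner order.
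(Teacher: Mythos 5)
Your proposal is correct and follows essentially the same route as the paper's own proof: decompose the test vector into its $N\bigl({\mathbf{H}^\ast}^\top\bigr)$ and $C\bigl(\mathbf{H}^\ast\bigr)$ components, use the fact that $\mathbf{E}_p^\mathrm{N}$ and $\mathbf{E}_p^\mathrm{C}$ each annihilate the complementary subspace, and reduce both quadratic forms to $\|\mathbf{U}_p^\top \mathbf{x}\|^2$ (minus $\|\mathbf{x}_\mathrm{C}\|^2$ in the C-noise case), with the projector bound $\mathbf{U}_p\mathbf{U}_p^\top \preceq \mathbf{I}_n$ closing the argument. The one point you flag as delicate (passing from ``eigenvectors lie in the subspace'' to ``the matrix kills the orthogonal complement'') is indeed used silently by the paper as well, so your explicit acknowledgment of it is, if anything, slightly more careful.
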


Lemma \ref{lemma1}-\ref{lemma2} illustrate the relationship between the dual noise and the optimal partition $\mathbf{H}^\ast$, and Lemma \ref{lemma3} shows that we can remove dual noise by minimizing the absolute value of their trace. 
Due to space limit, the detailed proofs of Lemma \ref{lemma1}-\ref{lemma3} are provided in the appendix.

\subsubsection{Minimizing C-noise} \label{Minimizing-C}
Recall our motivation that we aim to minimize C-noise, ideally $\mathbf{E}_p ^\mathrm{C} = 0$, according to Lemma \ref{lemma2} and Lemma \ref{lemma3}, it is equivalent to
\begin{equation}\label{eq:influ_EC}
\begin{split}
\mathrm{Tr}\left( \mathbf{E}_p ^\mathrm{C} \mathbf{H}^\ast {\mathbf{H}^\ast}^\top \right) = 0,\;\; \forall p \in \Delta_m.
\end{split}
\end{equation}
However, directly solving Eq. (\ref{eq:influ_EC}) is difficult due to the unknown $\mathbf{H}^\ast$. 
Fortunately, Theorem \ref{thm1} provides a necessary condition to satisfy $\mathbf{E}_p ^\mathrm{C} = 0$. 

\newtheorem{thm}{\bf Theorem}[section]
\begin{thm}\label{thm1}
$\|\mathbf{U}_p^\top \mathbf{U}_q\|_\mathrm{F}^2 \geq k$ is necessary for $\mathbf{E}_p ^\mathrm{C} = 0$, $\forall p \in \Delta_m$. 
\end{thm}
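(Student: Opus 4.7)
The plan is to prove the contrapositive-style implication directly: assume $\mathbf{E}_p^{\mathrm{C}} = 0$ (and, by the universal quantifier, also $\mathbf{E}_q^{\mathrm{C}} = 0$), then derive a lower bound of $k$ on $\|\mathbf{U}_p^\top \mathbf{U}_q\|_{\mathrm{F}}^2$. The natural starting point is to rewrite the Frobenius norm as a trace over the two projector-like matrices $\mathbf{U}_p\mathbf{U}_p^\top$ and $\mathbf{U}_q\mathbf{U}_q^\top$, so that the dual-noise decomposition of Eq.~(\ref{eq:E})--(\ref{eq:ENEC}) becomes immediately applicable.

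Concretely, I would first use the cyclic property of the trace to write
\[
\|\mathbf{U}_p^\top \mathbf{U}_q\|_{\mathrm{F}}^2
= \mathrm{Tr}\!\left( \mathbf{U}_p \mathbf{U}_p^\top \, \mathbf{U}_q \mathbf{U}_q^\top \right).
\]
Then I would substitute Eq.~(\ref{eq:E}) and Eq.~(\ref{eq:ENEC}) for each of the two projectors. Under the assumption $\mathbf{E}_p^{\mathrm{C}}=\mathbf{E}_q^{\mathrm{C}}=0$, the expression reduces to the product $(\mathbf{H}^\ast {\mathbf{H}^\ast}^\top + \mathbf{E}_p^{\mathrm{N}})(\mathbf{H}^\ast {\mathbf{H}^\ast}^\top + \mathbf{E}_q^{\mathrm{N}})$, which expands into four trace terms. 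The two cross-terms $\mathrm{Tr}(\mathbf{H}^\ast {\mathbf{H}^\ast}^\top \mathbf{E}_q^{\mathrm{N}})$ and $\mathrm{Tr}(\mathbf{E}_p^{\mathrm{N}} \mathbf{H}^\ast {\mathbf{H}^\ast}^\top)$ vanish by Lemma~\ref{lemma1}. The ``main'' term is $\mathrm{Tr}(\mathbf{H}^\ast {\mathbf{H}^\ast}^\top \mathbf{H}^\ast {\mathbf{H}^\ast}^\top) = \mathrm{Tr}({\mathbf{H}^\ast}^\top \mathbf{H}^\ast {\mathbf{H}^\ast}^\top \mathbf{H}^\ast) = \mathrm{Tr}(\mathbf{I}_k) = k$ by the orthonormality constraint ${\mathbf{H}^\ast}^\top \mathbf{H}^\ast = \mathbf{I}_k$. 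The final residual term is $\mathrm{Tr}(\mathbf{E}_p^{\mathrm{N}} \mathbf{E}_q^{\mathrm{N}})$.

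The only nontrivial step is justifying that this residual is non-negative. Since Lemma~\ref{lemma3} guarantees that both $\mathbf{E}_p^{\mathrm{N}}$ and $\mathbf{E}_q^{\mathrm{N}}$ are PSD, I would invoke the standard fact that the trace of a product of two PSD matrices is non-negative: writing $\mathbf{E}_p^{\mathrm{N}} = \mathbf{A}^\top \mathbf{A}$ via a symmetric square root, we get $\mathrm{Tr}(\mathbf{E}_p^{\mathrm{N}} \mathbf{E}_q^{\mathrm{N}}) = \mathrm{Tr}(\mathbf{A}\, \mathbf{E}_q^{\mathrm{N}} \mathbf{A}^\top) \geq 0$ because $\mathbf{A}\, \mathbf{E}_q^{\mathrm{N}} \mathbf{A}^\top$ is itself PSD. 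Combining everything yields $\|\mathbf{U}_p^\top \mathbf{U}_q\|_{\mathrm{F}}^2 = k + \mathrm{Tr}(\mathbf{E}_p^{\mathrm{N}} \mathbf{E}_q^{\mathrm{N}}) \geq k$, establishing the claim.

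The main obstacle I anticipate is a subtle one about the quantifier structure of the hypothesis: the theorem states ``necessary for $\mathbf{E}_p^{\mathrm{C}} = 0,\ \forall p \in \Delta_m$'', and the argument above needs C-noise to vanish for both indices $p$ and $q$ simultaneously, so I should state this scope explicitly at the outset to avoid ambiguity. Beyond that, I expect the proof to be a short and clean calculation, since Lemmas~\ref{lemma1}--\ref{lemma3} have done essentially all the heavy lifting; the argument is mainly an exercise in bookkeeping, and there is no need to appeal to Eq.~(\ref{eq:EN})--(\ref{eq:EC}) directly.
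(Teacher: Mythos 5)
Your proposal is correct and follows essentially the same route as the paper's own proof: rewrite the Frobenius norm as $\mathrm{Tr}(\mathbf{U}_p\mathbf{U}_p^\top\mathbf{U}_q\mathbf{U}_q^\top)$, substitute the noise decomposition with $\mathbf{E}_p^{\mathrm{C}}=\mathbf{E}_q^{\mathrm{C}}=0$, kill the cross terms with Lemma~\ref{lemma1}, and bound the residual $\mathrm{Tr}(\mathbf{E}_p^{\mathrm{N}}\mathbf{E}_q^{\mathrm{N}})\geq 0$ via Lemma~\ref{lemma3}. Your explicit square-root justification of that last inequality and your remark on the quantifier scope are both slightly more careful than the paper's version, which simply asserts the nonnegativity from the PSD property.
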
 

\begin{proof}
Given $\mathbf{E}_p ^\mathrm{C} = 0$ and $\mathbf{H}^{\ast\top} {\mathbf{H}^\ast} = \mathbf{I}_{k}$, according to Eq. (\ref{eq:f}), we have
\begin{equation}\label{eq:proof1.1}
\begin{split}
&\|\mathbf{U}_p^\top \mathbf{U}_q\|_\mathrm{F}^2 = \mathrm{Tr}\left(\mathbf{U}_p \mathbf{U}_p^\top \mathbf{U}_q \mathbf{U}_q^\top\right) \\
=& \mathrm{Tr}\left( \left(\mathbf{H}^\ast {\mathbf{H}^\ast}^\top + \mathbf{E}_p ^\mathrm{N}\right) \left(\mathbf{H}^\ast {\mathbf{H}^\ast}^\top + \mathbf{E}_q ^\mathrm{N}\right) \right)\\
=& \mathrm{Tr}\left( \mathbf{H}^\ast {\mathbf{H}^\ast}^\top \mathbf{H}^\ast {\mathbf{H}^\ast}^\top + \mathbf{E}_p ^\mathrm{N} \mathbf{H}^\ast {\mathbf{H}^\ast}^\top  + \mathbf{H}^\ast {\mathbf{H}^\ast}^\top \mathbf{E}_q ^\mathrm{N} + \mathbf{E}_p ^\mathrm{N} \mathbf{E}_q ^\mathrm{N} \right).
\end{split}
\end{equation}

According to Lemma (\ref{lemma1}), Eq. (\ref{eq:proof1.1}) is equivalent to 
\begin{equation}\label{eq:proof1.2}
\begin{split}
&\mathrm{Tr}\left( \mathbf{H}^\ast {\mathbf{H}^\ast}^\top \mathbf{H}^\ast {\mathbf{H}^\ast}^\top + \mathbf{E}_p ^\mathrm{N} \mathbf{E}_q ^\mathrm{N} \right) = k + \mathrm{Tr}\left( \mathbf{E}_p ^\mathrm{N} \mathbf{E}_q ^\mathrm{N} \right). 
\end{split}
\end{equation}

Since $\mathbf{E}_p ^\mathrm{N}$ satisfies PSD according to Lemma \ref{lemma3}, we have $\mathrm{Tr}\left( \mathbf{E}_p ^\mathrm{N} \mathbf{E}_q ^\mathrm{N} \right) \geq 0$, i.e. Eq. (\ref{eq:proof1.2}) $\geq k$. Consequently, $\|\mathbf{U}_p^\top \mathbf{U}_q\|_\mathrm{F}^2 \geq k $. 

This completes the proof.
\end{proof}
\subsubsection{Minimizing N-noise} \label{Minimizing-N}
Similarly, the optimal solution of minimizing N-noise is $\mathbf{E}_p^ \mathrm{N} = \mathbf{0}$. 
According to Lemma \ref{lemma3}, it is equivalent to $\mathrm{Tr}(\mathbf{E}_p^ \mathrm{N}) = 0$. Therefore, the original goal to minimize $\mathbf{E}_p^ \mathrm{N}$ can be transformed to minimize $\mathrm{Tr}(\mathbf{E}_p^ \mathrm{N})$. For MKC scenario, we should minimize $\sum_{p=1}^{m} \mathrm{Tr}(\mathbf{E}_p^ \mathrm{N})$. Furthermore, Theorem \ref{thm2} illustrates that minimizing $\sum_{p=1}^{m} \mathrm{Tr}(\mathbf{E}_p^ \mathrm{N})$ is equivalent to minimizing $\mathbf{d}^\top \mathbf{1}$. 

\begin{thm}\label{thm2}
Fixing $\{\mathbf{E}_p^\mathrm{C}\}_{p=1}^m$, minimizing $\sum_{p=1}^{m} \mathrm{Tr}(\mathbf{E}_p^ \mathrm{N})$ is equivalent to minimizing $\mathbf{d}^\top \mathbf{1}$. 
\end{thm}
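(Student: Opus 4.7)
The plan is to reduce the trace sum $\sum_{p=1}^m \mathrm{Tr}(\mathbf{E}_p^\mathrm{N})$ to the quantity $\mathbf{d}^\top \mathbf{1}$, where $\mathbf{d} = (d_1, d_2, \ldots, d_m)^\top$ collects the feature dimensions introduced in Section~\ref{Se_Methodology} (so that $\mathbf{d}^\top \mathbf{1} = \sum_{p=1}^m d_p$). The whole argument will be a one-line trace computation, built on top of the decomposition in Eq.~(\ref{eq:E}) together with Eq.~(\ref{eq:ENEC}); no eigenspace reasoning will be needed beyond what Lemmas~\ref{lemma1}--\ref{lemma3} already provide.

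First I would take the trace of the identity
\[
\mathbf{U}_p \mathbf{U}_p^\top \;=\; \mathbf{H}^\ast {\mathbf{H}^\ast}^\top + \mathbf{E}_p^\mathrm{N} + \mathbf{E}_p^\mathrm{C},\qquad \forall p \in \Delta_m.
\]
Using the cyclic property of the trace together with $\mathbf{U}_p^\top \mathbf{U}_p = \mathbf{I}_{d_p}$ and ${\mathbf{H}^\ast}^\top \mathbf{H}^\ast = \mathbf{I}_k$, the left-hand side collapses to $d_p$ and the first term on the right-hand side collapses to $k$. Rearranging gives the key per-kernel identity
\[
\mathrm{Tr}(\mathbf{E}_p^\mathrm{N}) \;=\; d_p - k - \mathrm{Tr}(\mathbf{E}_p^\mathrm{C}).
\]

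Next I would sum this over $p = 1,\ldots,m$, obtaining
\[
\sum_{p=1}^{m} \mathrm{Tr}(\mathbf{E}_p^\mathrm{N}) \;=\; \mathbf{d}^\top \mathbf{1} - mk - \sum_{p=1}^{m} \mathrm{Tr}(\mathbf{E}_p^\mathrm{C}).
\]
Since the hypothesis fixes $\{\mathbf{E}_p^\mathrm{C}\}_{p=1}^m$, both $mk$ and $\sum_{p=1}^m \mathrm{Tr}(\mathbf{E}_p^\mathrm{C})$ are constants with respect to the optimization variables. Hence the objective $\sum_{p=1}^{m} \mathrm{Tr}(\mathbf{E}_p^\mathrm{N})$ differs from $\mathbf{d}^\top \mathbf{1}$ only by an additive constant, and the two minimization problems share the same minimizers.

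There is no real obstacle in this argument; the only subtlety is nomenclature. The symbol $\mathbf{d}$ has not been formally declared as a vector in the text preceding the theorem, so I would open the proof with a one-sentence reminder that $\mathbf{d} \triangleq (d_1, \ldots, d_m)^\top$ collects the per-kernel feature dimensions appearing in Section~\ref{Se_Methodology}, ensuring the statement $\mathbf{d}^\top \mathbf{1} = \sum_p d_p$ is unambiguous. With that in place, the equivalence of the two objectives is immediate from the displayed identity above.
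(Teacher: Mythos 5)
Your proof is correct and follows essentially the same route as the paper: both take the trace of the decomposition $\mathbf{U}_p\mathbf{U}_p^\top = \mathbf{H}^\ast{\mathbf{H}^\ast}^\top + \mathbf{E}_p^\mathrm{N} + \mathbf{E}_p^\mathrm{C}$, use $\mathbf{U}_p^\top\mathbf{U}_p = \mathbf{I}_{d_p}$ and ${\mathbf{H}^\ast}^\top\mathbf{H}^\ast = \mathbf{I}_k$ to reduce the sum to $\mathbf{d}^\top\mathbf{1}$ minus the constant $\sum_p\bigl(k + \mathrm{Tr}(\mathbf{E}_p^\mathrm{C})\bigr)$, and conclude the two objectives differ by an additive constant. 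Your explicit per-kernel identity $\mathrm{Tr}(\mathbf{E}_p^\mathrm{N}) = d_p - k - \mathrm{Tr}(\mathbf{E}_p^\mathrm{C})$ is just a cleaner presentation of the paper's one-line computation.
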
 

\begin{proof}
Given $\mathbf{U}_p \mathbf{U}_p^\top = \mathbf{I}_{d_p}$ and $\mathbf{H}^{\ast\top} {\mathbf{H}^\ast} = \mathbf{I}_{k}$ for all $p \in \Delta_m$, according to Eq. (\ref{eq:E}), we have
\begin{equation}\label{eq:proof2.1}
\begin{split}
&\sum_{p=1}^{m} \mathrm{Tr}\left(\mathbf{E}_p^ \mathrm{N}\right) = \sum_{p=1}^{m}\left(\mathrm{Tr} \left( \mathbf{U}_p \mathbf{U}_p^\top\right) - \mathrm{Tr}\left(\mathbf{H}^\ast {\mathbf{H}^\ast}^\top + \mathbf{E} _p^{C} \right)\right) \\
=& \sum_{p=1}^{m} {d_p} - \sum_{p=1}^{m} \left({k} + \mathrm{Tr}\left(\mathbf{E}_p^{C}\right)\right) \iff \sum_{p=1}^{m} {d_p} = \mathbf{d} ^\top \mathbf{1}, 
\end{split}
\end{equation}
where $\mathbf{d} = [d_1, d_2, \cdots, d_m]^{\top}$ collects the dimension of feature matrices $\mathbf{U}_p$. Note that the above deduction is valid since $\sum_{p=1}^{m} \left({k} + \mathrm{Tr}\left(\mathbf{E}_p^{C}\right)\right)$ is a constant and can be eliminated for optimization.  

This completes the proof.
\end{proof}
\subsection{Proposed Formulation}
According to the aforementioned analysis on minimizing dual noise, we integrate Theorem \ref{thm1} and Theorem \ref{thm2} into a unified and parameter-free framework, i,e.   
\begin{equation}\label{eq:model1}
\begin{split}
\min_\mathbf{d} \; \mathbf{d}^\top \mathbf{1}, \;\;
\mathrm{s.t.}\; \|\mathbf{U}_p^\top \mathbf{U}_q\|_\mathrm{F}^2 \geq k,\; \forall p,q \in \Delta_m;\; \mathbf{d} \in \mathbb{Z}_+^m,
\end{split}
\end{equation}
where $\mathbf{d} = [d_1, d_2, \cdots, d_m]^{\top}$ collects the dimensions of feature matrices $\mathbf{U}_p$, and $\mathbb{Z} _+^m$ denotes the set of all positive integers.

From the above compact model, we have the following observations: (i) Our motivation aims to minimize dual noise, and we derive a straightforward but elegant framework. (ii) The objective is derived from minimizing N-noise, and the constraint is originated from minimizing C-noise, which directly serves for noise minimization purpose. (iii) Our model is parameter-free, satisfying the essence of unsupervised clustering.

Although our algorithm is compact and elegant, it is difficult to solve Eq. (\ref{eq:model1}) directly. Inspired by the widely employed Big M method \cite{griva2009linear} in operation research, we introduce auxiliary variables $\{ a_p \}_{p=1} ^m$ to transform the original Eq. (\ref{eq:model1}) into the following formulation:
\begin{equation}\label{eq:model2}
\begin{split}
\min_{\mathbf{d},\hat{\mathbf{d}}} \;\;& \frac{1}{2}(\mathbf{d} + \hat{\mathbf{d}})^\top \mathbf{1} + \frac{\mathrm{M}}{2}\|\mathbf{d} - \hat{\mathbf{d}}\|_2^2,\\
\mathrm{s.t.}\;\; &\|\mathbf{U}_p^\top \hat{\mathbf{U}}_q\|_\mathrm{F}^2 \geq k,\; \forall p,q \in \Delta_m;\; \mathbf{d}, \hat{\mathbf{d}} \in \mathbb{Z}_+^m,
\end{split}
\end{equation}
where $\hat{\mathbf{d}} = \mathbf{d} + \mathbf{a}$ and $\hat{\mathbf{U}}_q \in \mathbb{R}^{n \times \hat{d}_p}$. As pointed in \cite{griva2009linear}, the solution of Eq. (\ref{eq:model1}) is equivalent to Eq. (\ref{eq:model2}) for a large enough $\mathrm{M}$, i.e. $\|\mathbf{d} - \hat{\mathbf{d}} \|_2^2$ will be zero. Theorem \ref{thm3} gives a brief proof.   
\begin{thm}\label{thm3}
The local optimal solution of Eq. (\ref{eq:model2}) is equivalent to that of Eq. (\ref{eq:model1}) for a large enough constant $\mathrm{M}$.
\end{thm}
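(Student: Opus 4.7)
The plan is to run a standard exact-penalty argument whose validity rests entirely on the discrete (integer) nature of the variables $\mathbf{d}, \hat{\mathbf{d}} \in \mathbb{Z}_+^m$. First, I would verify the easy direction: any feasible $\mathbf{d}$ of Eq.~(\ref{eq:model1}) lifts to a feasible point $(\mathbf{d}, \mathbf{d})$ of Eq.~(\ref{eq:model2}) with the same objective value, because setting $\hat{\mathbf{d}} = \mathbf{d}$ annihilates the quadratic penalty term and collapses the constraint $\|\mathbf{U}_p^\top \hat{\mathbf{U}}_q\|_\mathrm{F}^2 \geq k$ back to $\|\mathbf{U}_p^\top \mathbf{U}_q\|_\mathrm{F}^2 \geq k$. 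Fixing any such feasible $\mathbf{d}_0$ supplies a universal upper bound $C_0 := \mathbf{d}_0^\top \mathbf{1}$ on the optimal value of Eq.~(\ref{eq:model2}) that is independent of $M$.

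For the reverse direction, I would suppose that $(\mathbf{d}^\star, \hat{\mathbf{d}}^\star)$ is a local optimum of Eq.~(\ref{eq:model2}) with $\mathbf{d}^\star \neq \hat{\mathbf{d}}^\star$. Because both iterates lie in $\mathbb{Z}_+^m$, their difference is a nonzero integer vector, yielding the key gap $\|\mathbf{d}^\star - \hat{\mathbf{d}}^\star\|_2^2 \geq 1$. I then compare the objective at $(\mathbf{d}^\star, \hat{\mathbf{d}}^\star)$ with the competing point $(\mathbf{d}^\star, \mathbf{d}^\star)$ obtained by replacing $\hat{\mathbf{d}}^\star$ with $\mathbf{d}^\star$, which is feasible by monotonicity of $\|\mathbf{U}_p^\top \mathbf{U}_q\|_\mathrm{F}^2$ in the feature-matrix dimensions and whose objective value is at most $C_0$. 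Subtracting the two objectives gives an inequality of the form
\begin{equation*}
\frac{M}{2}\|\mathbf{d}^\star - \hat{\mathbf{d}}^\star\|_2^2 \,\leq\, \frac{1}{2}(\mathbf{d}^\star - \hat{\mathbf{d}}^\star)^\top \mathbf{1} \,\leq\, C_0 .
\end{equation*}
Any choice $M > 2 C_0$ then violates this bound, forcing $\mathbf{d}^\star = \hat{\mathbf{d}}^\star$; after substitution the penalty vanishes and Eq.~(\ref{eq:model2}) reduces identically to Eq.~(\ref{eq:model1}), so their local minima are in bijection.

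The main obstacle I anticipate is making the notion of \emph{local} minimum precise over the discrete lattice, since neither the feasible set nor the neighborhood structure is inherited from a continuous relaxation. I would pin down the neighborhood used by the alternating procedure in Section~\ref{Se_Methodology} (naturally, coordinate-wise unit changes of $\mathbf{d}$ and $\hat{\mathbf{d}}$) and verify that the swap $(\mathbf{d}^\star, \hat{\mathbf{d}}^\star) \to (\mathbf{d}^\star, \mathbf{d}^\star)$ decomposes into a finite sequence of such local moves, at least one of which strictly decreases the objective once $M > 2 C_0$. A minor subtlety is ensuring that feasibility of the cross-constraint is preserved all the way along this path, which again follows from the monotonicity of $\|\mathbf{U}_p^\top \hat{\mathbf{U}}_q\|_\mathrm{F}^2$ in the column counts of both feature matrices.
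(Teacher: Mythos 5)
Your skeleton is the same as the paper's: treat Eq.~(\ref{eq:model2}) as a Big-M exact penalty for Eq.~(\ref{eq:model1}), argue that a large enough $\mathrm{M}$ forces $\mathbf{d}^\star=\hat{\mathbf{d}}^\star$ at any local optimum, and then observe that on the diagonal $\hat{\mathbf{d}}=\mathbf{d}$ the penalized objective collapses to $\mathbf{d}^\top\mathbf{1}$ and the constraint collapses to that of Eq.~(\ref{eq:model1}). The difference is that the paper simply \emph{asserts} the first step (``a large enough $\mathrm{M}$ means $\|\mathbf{d}^\ast-\hat{\mathbf{d}}^\ast\|_2^2=0$'', citing the Big M method), whereas you try to prove it from the integrality gap $\|\mathbf{d}^\star-\hat{\mathbf{d}}^\star\|_2^2\ge 1$. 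That is the right idea, and it is more informative than what the paper writes, since for integer vectors $(\mathbf{d}^\star-\hat{\mathbf{d}}^\star)^\top\mathbf{1}\le\|\mathbf{d}^\star-\hat{\mathbf{d}}^\star\|_2^2$, so your displayed inequality already forces $\mathrm{M}\le 1$ --- an explicit threshold, with no need for the constant $C_0$ at all.

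Two points in your execution need repair. First, the comparison point $(\mathbf{d}^\star,\mathbf{d}^\star)$ need not be feasible: monotonicity of $\|\mathbf{U}_p^\top\hat{\mathbf{U}}_q\|_\mathrm{F}^2$ in the column count only helps when $d_q^\star\ge\hat{d}_q^\star$; if $\hat{d}_q^\star>d_q^\star$ for some $q$, truncating $\hat{\mathbf{U}}_q$ to $d_q^\star$ columns shrinks the Frobenius norm and can break the constraint. The fix is to move only in the feasible direction: pick any coordinate with $d_q^\star\neq\hat{d}_q^\star$ and increase the \emph{smaller} of the two by one. This is a unit move in a single coordinate (hence lies in any reasonable lattice neighborhood, resolving your local-versus-global worry as well), it preserves feasibility because adding a column can only increase $\|\mathbf{U}_p^\top\hat{\mathbf{U}}_q\|_\mathrm{F}^2$, it raises the linear term by $1/2$, and it lowers the penalty by at least $\frac{\mathrm{M}}{2}\bigl(2|d_q^\star-\hat{d}_q^\star|-1\bigr)\ge\frac{\mathrm{M}}{2}$, so the objective strictly decreases once $\mathrm{M}>1$, contradicting local optimality. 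Second, the bound $\frac{1}{2}(\mathbf{d}^\star-\hat{\mathbf{d}}^\star)^\top\mathbf{1}\le C_0$ is not justified for a merely local optimum, whose objective value need not lie below the value at your reference point $(\mathbf{d}_0,\mathbf{d}_0)$; drop it and rely on the coordinate-wise argument above. With those repairs your argument is complete and, unlike the paper's, actually quantifies ``large enough $\mathrm{M}$.''
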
 

\begin{proof}
We first define two functions: 
\begin{equation}\label{}
\begin{split}
g(\mathbf{d}) = \mathbf{d} ^\top \mathbf{1},\;\; G_\mathrm{M}(\mathbf{d}, \hat{\mathbf{d}}) = \frac{1}{2}(\mathbf{d} + \hat{\mathbf{d}})^\top \mathbf{1} + \frac{\mathrm{M}}{2}\|\mathbf{d} - \hat{\mathbf{d}}\|_2^2, 
\end{split}
\end{equation}
Supposing $\left( \mathbf{d} ^\ast, \hat{\mathbf{d}} ^\ast \right)$ is a local optimal solution of Eq. (\ref{eq:model2}), which can be formulated as
\begin{equation}\label{neib1}
\begin{split}
\exists \; N \left( \mathbf{d} ^\ast, \hat{\mathbf{d}} ^\ast \right),\; \forall \left( \mathbf{d}, \hat{\mathbf{d}} \right) \in N\left( \mathbf{d} ^\ast, \hat{\mathbf{d}} ^\ast \right),\; G_\mathrm{M} \left( \mathbf{d} ^\ast, \hat{\mathbf{d}} ^\ast \right) \leq G_\mathrm{M}\left( \mathbf{d}, \hat{\mathbf{d}} \right), 
\end{split}
\end{equation}
where $N\left( \bullet \right)$ denotes the neighborhood of $\bullet$.

A large enough $\mathrm{M}$ 
means $\|\mathbf{d} ^\ast - \hat{\mathbf{d}} ^\ast \|_2^2 = 0$, i.e. $\mathbf{d} ^\ast = \hat{\mathbf{d}} ^\ast$. According to Eq. (\ref{neib1}), we have
\begin{equation}\label{neib2}
\begin{split}
\exists N\left( \mathbf{d} ^\ast \right),\; \forall \mathbf{d} \in N\left( \mathbf{d}^\ast \right),\; G_\mathrm{M} \left( \mathbf{d} ^\ast, \mathbf{d} ^\ast \right) \leq G_\mathrm{M}\left( \mathbf{d}, \mathbf{d} \right). 
\end{split}
\end{equation}

Furthermore, we have 
\begin{equation}\label{}
\begin{split}
g\left( \mathbf{d} ^\ast \right) = G_\mathrm{M} \left( \mathbf{d} ^\ast, \mathbf{d} ^\ast \right) \leq G_\mathrm{M} \left( \mathbf{d}, \mathbf{d}  \right) = g\left( \mathbf{d} \right).
\end{split}
\end{equation}

This completes the proof. 
\end{proof}

\subsection{Optimization}
Directly optimizing Eq. (\ref{eq:model2}) is difficult since it is not convex. This section provides an alternative strategy where each sub-optimization is convex.
\begin{algorithm}[tb]
  \caption{Multiple kernel clustering with dual noise minimization}
  \label{alg:optimization}
  \begin{algorithmic}[1]
    \Require
     $\{{\mathbf{H}}_p\} _{p=1} ^m$ and $k$. 
    \Ensure
      The optimal $\{{\mathbf{U}}_p \} _{p=1} ^m$ and $\mathbf{d}$.
    \State Initialize a feasible $\mathbf{d}$, $\hat{\mathbf{d}} = \mathbf{d}$, flag $= 1$;
    \While{flag}
      \State update $\mathbf{d}$ by optimizing Eq. (\ref{eq:model2_dp});
      \State update $\hat{\mathbf{d}}$ by optimizing Eq. (\ref{eq:model2_dp2});
    \If{$\mathbf{d} = \hat{\mathbf{d}}$}
      \State flag $= 0$.
    \EndIf
    \EndWhile
  \end{algorithmic}
\end{algorithm}

\subsubsection{Sub-optimization of updating $\bm{d}$}\label{section:solve_d}

With fixed $\hat{\mathbf{d}}$, Eq. (\ref{eq:model2}) is formulated into
\begin{equation}\label{eq:model2_d}
\begin{split}
\min_{\{d_p\}_{p=1}^m} &\sum_{p=1}^m \frac{1}{2} (d_p + \hat{d}_p) + \frac{\mathrm{M}}{2} (d_p - \hat{d}_p)^2,\\
\mathrm{s.t.}\;\; &\|\mathbf{U}_p^\top \hat{\mathbf{U}}_q\|_\mathrm{F}^2 \geq k,\;\; \forall q \in \Delta_m;\; d_p \in \mathbb{Z}_+.
\end{split}
\end{equation}

As pointed in \cite{DBLP:books/daglib/0009359}, the object function of Eq. (\ref{eq:model2_d}) is $\mathrm{L} ^\natural$-convex on the effective domain, and the effective domain is restricted to an $\mathrm{L} ^\natural$-convex set. Consequently, Eq. (\ref{eq:model2_d}) is an $\mathrm{L} ^\natural$-convex problem about $\mathbf{d}$. 

Since each $d_p$ is independent, Eq. (\ref{eq:model2_d}) can be transformed into $m$ independent sub-problems and each can be separately solved by
\begin{equation}\label{eq:model2_dp}
\begin{split}
\min_{d_p} \;\;&\frac{1}{2} (d_p + \hat{d}_p) + \frac{\mathrm{M}}{2} (d_p - \hat{d}_p)^2,\\
\mathrm{s.t.}\;\; &\|\mathbf{U}_p^\top \hat{\mathbf{U}}_q\|_\mathrm{F}^2 \geq k,\; \forall q \in \Delta_m,\; d_p \in \mathbb{Z}_+.
\end{split}
\end{equation}

According to \cite{DBLP:series/natosec/Onn11}, the above one-dimensional $\mathrm{L} ^\natural$-convex optimization problem can be solved by discrete binary search efficiently. 
\subsubsection{Sub-optimization of updating $\hat{\bm{d}}$}
Similarly, we can obtain $\hat{\mathbf{d}}$ by solving the following problem separately for each $p$: 
\begin{equation}\label{eq:model2_dp2}
\begin{split}
\min_{\hat{d}_p} \;\;&\frac{1}{2} (\hat{d}_p + d_p) + \frac{\mathrm{M}}{2} (\hat{d}_p - d_p)^2,\\
\mathrm{s.t.}\;\; &\|{\hat{\mathbf{U}}_p}^\top \mathbf{U}_q\|_\mathrm{F}^2 \geq k,\; \forall q \in \Delta_m,\; \hat{d}_p \in \mathbb{Z}_+.
\end{split}
\end{equation}

The optimization procedures in solving Eq. (\ref{eq:model2}) is outlined in Algorithm \ref{alg:optimization}. Note that the optimal $\mathbf{U}_p \mathbf{U}_p ^\top$ learned by Algorithm \ref{alg:optimization} is PSD for all $p \in \Delta_m$, which can be regarded as kernel matrix. As a result, we can employ average kernel $k$-means on $\{\mathbf{U}_p \mathbf{U}_p ^\top\} _{p=1}^m$ to recover the optimal $\mathbf{H} ^\ast$, which can be solved efficiently by performing SVD on $\left[ \mathbf{U}_1, \mathbf{U}_2, \cdots, \mathbf{U}_m \right]$ and 
extract the rank-$k$ columns of the left singular matrix \cite{kang2020large}. 

\subsection{Complexity and Convergence}

\subsubsection{Computational Complexity}\label{Complexity}
The computation complexity of our algorithm involves two part, i.e. optimization and post-processing. In optimization, it involves two variables. Updating $\mathbf{d}$ involves $m$ independent sub-problems to compute the optimal $\mathbf{d}$, and each one conducts discrete binary search. Therefore, the computational complexity is $\mathcal{O} \left( n m^2 k^2 \log k \right)$. Similarly, updating $\hat{\mathbf{d}}$ also needs $\mathcal{O} \left( n m^2 k^2 \log k \right)$ for solving Eq. (\ref{eq:model2_dp2}). Therefore, the optimization process costs $\mathcal{O} \left( n m^2 k^2 \log k \right)$. Note that it achieves linear complexity with respect to sample number. 
For post-processing, we perform SVD on $\left[ \mathbf{U}_1, \mathbf{U}_2, \cdots, \mathbf{U}_m \right]$ to obtain the optimal partition matrix $\mathbf{H}^{\ast}$ and get the final clustering label by $k$-means. The complexity is $\mathcal{O} \left( n m^2 k^2 \right)$, which is also linear complexity with respect to $n$. Consequently, we develop a linear-complexity algorithm with $\mathcal{O} \left( n m^2 k^2 \log k \right)$, sharing the similar computational complexity with late fusion methods \cite{wang2019multi} and \cite{liu2021one}. Note that linear complexity means it is suitable for large-scale tasks. Moreover, our algorithm is free of hyper-parameter, which is more practical compared with the ones requiring parameter-tuning by grid search.
\subsubsection{Convergence}
Our algorithm is non-convex to directly compute all the variables, and we adopt an alternative optimization manner to solve the resultant model. According to Theorem 1 in \cite{Bezdek2003}, alternatively optimizing each sub-optimization is convex with other variables fixed. The objective of Algorithm \ref{alg:optimization} is monotonically decreased and lower bounded by zero. Consequently, our proposed algorithm can be guaranteed convergent to a local optimal solution.


\section{Experiment}\label{Se_Experiment}
\begin{table}[tbp]
\caption{Information of datasets}
\label{Datasets}
\centering
 \resizebox{0.42\textwidth}{!}{
  \small
\begin{tabular}{ccccc}\toprule
Dataset         & Type  & Samples & Views   & Clusters \\\midrule
BBCSport        & Text  & 544   & 2  & 5   \\
Plant           & Image & 940   & 69 & 4   \\
SensIT Vehicle  & Graph & 1500  & 2  & 3   \\
CCV             & Video & 6773  & 3  & 20  \\
Flower102       & Image & 8189  & 4  & 102 \\
Reuters         & Text  & 18758 & 5  & 6  \\\bottomrule
\end{tabular}}
\vspace{-10pt}
\end{table}
\begin{table*}[t] 
  \caption{Comparing the ACC, NMI, purity and ARI of eleven algorithms on six MVC datasets. The best one is marked in bold, the second best is marked in italic underline. `N/A' denotes out of memory and time-out errors. `Average Rank' records the average rank across six datasets.}
  \label{Performance}
\begin{center}
\vspace{-10pt}
\small   
{
  \centering
  \resizebox{\textwidth}{!}{
  \small
    \begin{tabular}{cccccccccccc}
    \toprule
    \multirow{2}{*}{Datasets}  &\multirow{2}{*}{AKKM} &{MKKM}  &{MKKM-MR} &{SWMC}  &{ONKC}  &{LFMKC} &{SPMKC} &{CAGL} &{OPLF} &{HMKC}  &\multirow{2}{*}{Proposed}\\
                               &                         &\cite{huang2012multiple} &\cite{liu2016multiple}   &\cite{nie2017self} &\cite{liu2017optimal} &\cite{wang2019multi}  &\cite{ren2020simultaneous} &\cite{ren2020consensus} &\cite{liu2021one} &\cite{liu2021hierarchical} & \\
    \midrule
    Number of parameter
    & - & - & 1 & - & 2 & 1 & 2 & 2 & - & 2 & -\\
    \midrule
    \multicolumn{12}{c} {ACC $(\%)$} \\
    \midrule
BBCSport & 39.47 & 39.38 & 39.51 & 36.76 & 39.71 & 60.06 & 46.51 & 71.51 &  60.85 & {\underline{\textit{71.90}} } & {\textbf{87.61}} \\
Plant & \underline{\textit{61.28}} & 56.05 & 50.27 & 38.94 & 41.43 & 59.53 & 32.87 & 43.09 & 48.51 & 59.21 & {\textbf{61.79}} \\
SensIT Vehicle & 53.73 & 53.36 & 54.13 & 34.67 & 54.21 & 66.28 & 54.20 & 34.40 & 54.87 & \underline{\textit{66.60}} & {\textbf{72.53}} \\
CCV & 19.63 & 17.99 & 21.24 & 10.84 & 22.39 & 25.13 & 9.67 & 12.58 & 22.74 & \underline{\textit{33.31}} & {\textbf{35.04}} \\
Flower102 & 27.07 & 22.41 & 40.22 & 6.72 & 39.55 & 38.45 & N/A & 26.25 & 29.78 & {\textbf{48.21}} & \underline{\textit{47.93}} \\
Reuters & 45.46 & 45.44 & 46.15 & 27.12 & 41.85 & 45.68 & N/A & N/A & 44.65 & \underline{\textit{46.84}} & {\textbf{48.17}}\\
Average Rank &6.33 & 7.83 & 5.50 & 10.00 & 6.33 & 3.83 & 8.50  & 7.80  & 5.33 & 2.17 & \textbf{1.17} \\
    \midrule
    \multicolumn{12}{c} {NMI $(\%)$} \\
    \midrule
BBCSport & 15.74 & 15.69 & 15.77 & 2.63 & 16.10 & 40.38 & 23.89 & {\textbf{72.74}} & 41.46 & 50.50 & \underline{\textit{69.70}} \\
Plant & 26.53 & 19.49 & 20.37 & 0.51 & 10.49 & 23.35 & 0.21 & 11.90 & 13.67 & \underline{\textit{27.98}} & {\textbf{31.06}} \\
SensIT Vehicle & 10.83 & 10.25 & 11.32 & 1.55 & 11.31 & \underline{\textit{23.53}} & 20.31 & 1.45 & 12.28 & 22.08 & {\textbf{32.09}} \\
CCV & 16.84 & 15.04 & 18.03 & 1.07 & 18.52 & 20.09 & 1.60 & 6.08 & 18.72 & \underline{\textit{29.85}} & {\textbf{30.79}} \\
Flower102 & 46.02 & 42.67 & 56.71 & 5.51 & 56.11 & 54.94 & N/A & 45.09 & 46.77 & {\textbf{61.92}} & \underline{\textit{61.73}} \\
Reuters & 27.37 & 27.35 & 25.30 & 1.35 & 22.27 & 27.39 & N/A & N/A & 27.09 & {\textbf{31.04}} & \underline{\textit{30.64}}\\
Average Rank & 6.33 & 7.83 & 5.83 & 10.33 & 6.67 & 3.67 & 7.75 & 7.40 & 5.33 & 2.00 & \textbf{1.50} \\
    \midrule
    \multicolumn{12}{c} {Purity $(\%)$} \\
    \midrule
BBCSport & 48.89 & 48.86 & 48.91 & 37.50 & 49.13 & 68.78 & 52.39 & 73.90 & 68.57 & \underline{\textit{74.65}} & {\textbf{87.61}} \\
Plant & \underline{\textit{61.28}} & 56.05 & 56.71 & 39.36 & 49.02 & 59.53 & 39.15 & 46.81 & 52.87 & 59.77 & {\textbf{62.10}} \\
SensIT Vehicle & 53.73 & 53.36 & 54.13 & 35.13 & 54.21 & 66.28 & 54.20 & 34.40 & 54.87 & \underline{\textit{66.60}} & {\textbf{72.53}} \\
CCV & 23.75 & 22.18 & 23.74 & 11.35 & 24.64 & 28.16 & 11.78 & 13.58 & 26.52 & \underline{\textit{37.05}} & {\textbf{38.32}} \\
Flower102 & 32.27 & 27.79 & 46.34 & 8.08 & 45.63 & 44.56 & N/A & 31.33 & 34.03 & \underline{\textit{54.81}} & {\textbf{54.93}} \\
Reuters & 53.01 & 52.94 & 52.15 & 28.25 & 52.63 & 53.23 & N/A & N/A & 52.92 & \underline{\textit{53.91}} & {\textbf{55.37}}\\
Average Rank & 6.00 & 7.83 & 6.33 & 10.17 & 6.00 & 3.67 & 8.25 & 8.00 & 5.33 & 2.17 & \textbf{1.00} \\
    \midrule
    \multicolumn{12}{c} {ARI $(\%)$} \\
    \midrule
BBCSport & 9.28 & 9.25 & 9.31 & 0.47 & 9.63 & 31.57 & 14.30 & \underline{\textit{56.64}} & 30.90 & 44.94 & {\textbf{69.60}} \\
Plant & \underline{\textit{24.64}} & 17.42 & 19.03 & -0.25 & 9.78 & 21.66 & -0.39 & 1.76 & 12.28 & 24.41 & {\textbf{28.90}} \\
SensIT Vehicle & 10.84 & 10.29 & 11.33 & 0.09 & 11.36 & 25.32 & 13.99 & 0.03 & 13.57 & \underline{\textit{25.55}} & {\textbf{35.51}} \\
CCV & 6.60 & 5.75 & 7.15 & -0.02 & 7.74 & 9.44 & -0.01 & 0.29 & 8.02 & \underline{\textit{14.74}} & {\textbf{15.70}} \\
Flower102 & 15.46 & 12.06 & 25.49 & 0.10 & 24.86 & 25.46 & N/A & 2.28 & 18.92 & {\textbf{34.34}} & \underline{\textit{34.23}} \\
Reuters & 21.84 & 21.82 & {\textbf{23.08}} & 0.09 & 20.32 & 22.10 & N/A & N/A & 21.25 & 22.59 & \underline{\textit{22.72}}\\
Average Rank & 6.33 & 7.83 & 5.00 & 10.33 & 6.50 & 3.67 & 7.75 & 8.00 & 5.67 & 2.33 & \textbf{1.33} \\
    \bottomrule
    \end{tabular}}}
    \vspace{-5pt}
\end{center}
\end{table*} 
\subsection{Datasets}
Table \ref{Datasets} lists six MKC datasets, including {BBCSport \footnote{\footnotesize{\texttt{http://mlg.ucd.ie/datasets/bbc.html}}}},
{Plant\footnote{\footnotesize{\texttt{https://bmi.inf.ethz.ch/supplements/protsubloc}}}},
{SensIT Vehicle\footnote{\footnotesize{\texttt{https://www.csie.ntu.edu.tw/~cjlin/libsvmtools/datasets/multiclass.html}}}},
{CCV\footnote{\footnotesize{\texttt{https://www.ee.columbia.edu/ln/dvmm/CCV/}}}},
{Flower102\footnote{\footnotesize{\texttt{http://www.robots.ox.ac.uk/\~{}vgg/data/flowers/}}}}, and
{Reuters\footnote{\footnotesize{\texttt{http://kdd.ics.uci.edu/databases/reuters21578/reuters21578.html}}}}.
The datasets vary in type and size, which will provide convincing evaluation for this work. 
All the datasets are downloaded form public websites.  
\begin{table*}[t] 
\caption{Comparing the CPU time. `Tune Time' denotes execution time including hyper-parameter tuning. `Run Time' denotes running time for this algorithm. `-' denotes a parameter-free algorithm. `N/A' denotes out of memory and time-out errors.} 
\label{Time_record}
\begin{center}
\vspace{-5pt}
\small
{
\centering
\resizebox{\textwidth}{!}{
\begin{tabular}{ccccccccccccc}
    \toprule
Datasets &  & AKKM & MKKM & MKKM-MR & SWMC & ONKC & LFMKC & SPMKC & CAGL & OPLF & HMKC & Proposed \\
    \midrule
\multirow{2}{*}{BBCSport} & Tune Time & - & - & 2.17 & - & 155.13 & 0.74 & 28.88 & 185.68 & 1.78 & 27.04 & - \\
 & Run Time & 0.04 & 0.04 & 0.07 & 7.17 & 0.16 & 0.02 & 0.80 & 1.15 & 0.09 & 0.38 & 0.02 \\
\multirow{2}{*}{Plant} & Tune Time & - & - & 1032.42 & - & 31740.88 & 25.19 & 1119.20 & 9813.97 & 21.04 & 1542.74 & - \\
 & Run Time & 0.16 & 3.88 & 33.30 & 101.68 & 33.03 & 0.81 & 31.09 & 60.58 & 1.05 & 21.43 & 15.29 \\
\multirow{2}{*}{SensIT Vehicle} & Tune Time & - & - & 16.63 & - & 1734.24 & 3.28 & 441.30 & 1532.34 & 3.41 & 127.52 & - \\
 & Run Time & 0.04 & 0.56 & 0.54 & 57.23 & 1.80 & 0.11 & 12.26 & 9.46 & 0.17 & 1.77 & 0.22 \\
\multirow{2}{*}{CCV} & Tune Time & - & - & 1598.44 & - & 347868.59 & 208.92 & 364927.26 & 181525.46 & 148.10 & 21775.50 & - \\
 & Run Time & 1.64 & 42.67 & 51.56 & 5512.67 & 361.99 & 6.74 & 10136.87 & 1120.53 & 7.40 & 302.44 & 55.99 \\
\multirow{2}{*}{Flower102} & Tune Time & - & - & 4343.86 & - & 624586.09 & 1195.65 & N/A & 382874.57 & 973.52 & 395255.46 & - \\
 & Run Time & 7.27 & 109.76 & 140.12 & 7612.89 & 649.93 & 38.57 & N/A & 2363.42 & 48.68 & 5489.66 & 237.29 \\
\multirow{2}{*}{Reuters} & Tune Time & - & - & 49510.69 & - & 3743171.77 & 1571.38 & N/A & N/A & 983.08 & 93400.45 & - \\
 & Run Time & 11.43 & 830.46 & 1597.12 & 84059.26 & 3895.08 & 50.69 & N/A & N/A & 49.15 & 1297.23 & 44.57\\
    \bottomrule 
\end{tabular}}}
\end{center}
\end{table*}
\subsection{Compared Methods}
Ten existing graph or kernel based multi-view clustering (MVC) models are adopted as baseline, i.e. \\
(1) {\bf AKKM} conducts KKM in average kernel space.\\   
(2) {\bf {MKKM}} \cite{huang2012multiple} combines base kernels with learned weights.\\
(3) {\bf {MKKM-MR}} \cite{liu2016multiple} proposes a matrix-induced regularization to avoid the sparsity of weight.\\
(4) {\bf {SWMC}} \cite{nie2017self} is a self-weighted method.\\  
(5) {\bf {ONKC}} \cite{liu2017optimal} learns the optimal neighborhood kernel.\\
(6) {\bf {LFMKC}} \cite{wang2019multi} maximizes the alignment of permuted base partitions and the consensus one.\\  
(7) {\bf {SPMKC}} \cite{ren2020simultaneous} simultaneously extract the global and local clustering structures by graph learning in MVC.\\
(8) {\bf {CAGL}} \cite{ren2020consensus} fuses a consensus graph across multiple kernels by graph learning with rank constraint.\\
(9) {\bf {OPLF}} \cite{liu2021one} is a one-pass version of LFMKC.\\
(10) {\bf {HKMC}} \cite{liu2021hierarchical} reduces the dimension of data representation hierarchically rather than abruptly.\\

Note that (i) the kernel fusion MKC algorithms, including AKKM, MKKM, MKKM-MR, and ONKC, (ii) graph learning based method SWMC, (iii) graph based MKC models, i.e. SPMKC and CAGL, share the similar computational complexity, i.e. $\mathcal{O}(n^{3})$. (iv) late fusion based algorithms, i.e. LFMKC and OPLF, are of $\mathcal{O}(n)$ complexity. Since HKMC involves eigenvalue decomposition during the optimization, it costs $\mathcal{O}(n^{3})$ complexity. 
\subsection{Experimental Setup}
For the MKC datasets, we suppose the real clusters $k$ is pre-knowledge. All the public datasets are centered and normalized by following \cite{shawe2004kernel}. For the methods involving $k$-means, the cluster centroids are initialized 50 times randomly and we report the average results, reducing the randomness. 
For the compared methods, the involved hyper-parameters of their public codes are carefully tuned according to authors' settings. For the proposed MKC-DNM model, the involved parameter $\mathrm{M}$ in the optimization is determined in the following way to accelerate the procedure: $\mathrm{M}$ is firstly set with a small value, then in each iteration, we multiply $\mathrm{M}$ by two until satisfying the convergence condition, i.e. $\mathbf{d} = \hat{\mathbf{d}}$. The clustering performance is analyzed by four metrics, i.e. Accuracy (ACC), Normalized Mutual Information (NMI), Purity, and Adjusted Rand Index (ARI) \cite{huang2018robust,wen2019unified,ren2019parallel,wen2020generalized,zhang2015low}. The experiments are conducted on a computer with Intel Core i9 10900X CPU (3.5GHz), 64GB RAM, and MATLAB 2020b (64bit).
\subsection{Clustering Performance}
Table \ref{Performance} lists four metrics comparison. The best results are marked in bold, the second best ones are marked in italic underline, and `N/A’ means unavailable results due to out of memory or time-out errors. Moreover, we record the average rank for a reference. From the reported results, we have the following observations:
\begin{enumerate}[1)]
\item The proposed algorithm exhibits the best or second best performance on six datasets. Compared to the late fusion based MKC methods, i.e. LFMKC, OPLF, and HMKC methods, our algorithm exhibits obvious improvement on most datasets, owing to our noise minimization mechanism.
\item Especially, HMKC can be considered as the most powerful competitor in MKC, our model still achieves comparable and better solutions on most datasets, with an obvious margin of 15.71\%, 2.59\%, 5.93\%, 5.32\%, 1.73\%, and 1.33\% of ACC on six datasets. We emphasize that our model is parameter-free in contrast to HMKC which introduces two empirical hyper-parameters and reports the best solution.
\item According to the average rank, our algorithm achieves the first rank over HMKC, LFMKC and OPLF methods, illustrating the superiority of our MKC-DNM algorithm.    
\end{enumerate}
\vspace{-10pt}

\begin{figure*}[!t]
\vspace{-5pt}
\begin{center}{
		\centering
            \hspace{-2pt}		
            \subfloat[ACC]{{\includegraphics[width=0.165\textwidth]{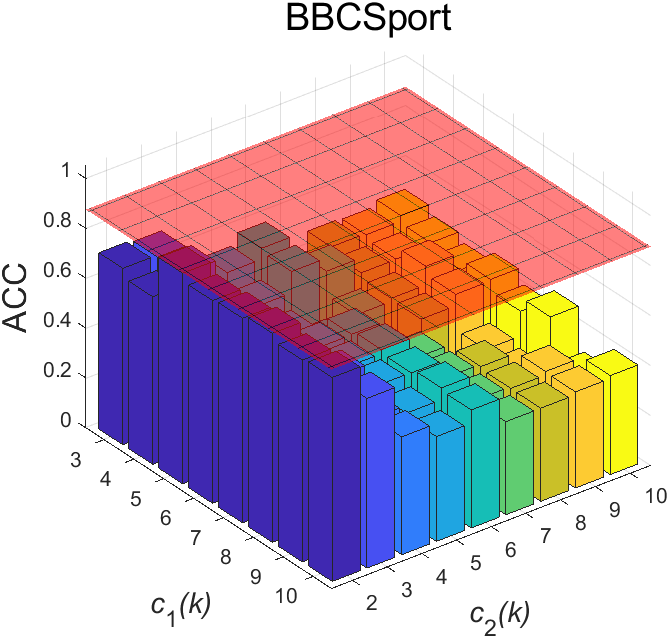}}}\hspace{-2pt}
            \subfloat[NMI]{{\includegraphics[width=0.165\textwidth]{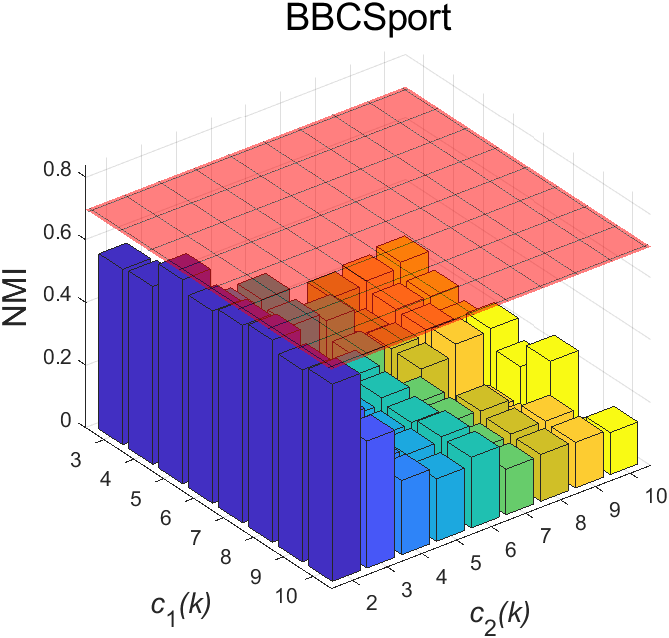}}}\hspace{-2pt}
            \subfloat[ACC]{{\includegraphics[width=0.165\textwidth]{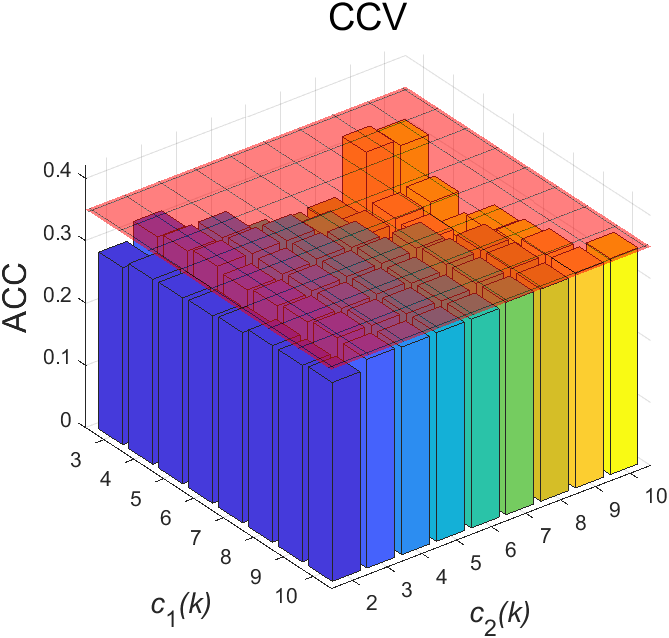}}}\hspace{2pt}
            \subfloat[NMI]{{\includegraphics[width=0.165\textwidth]{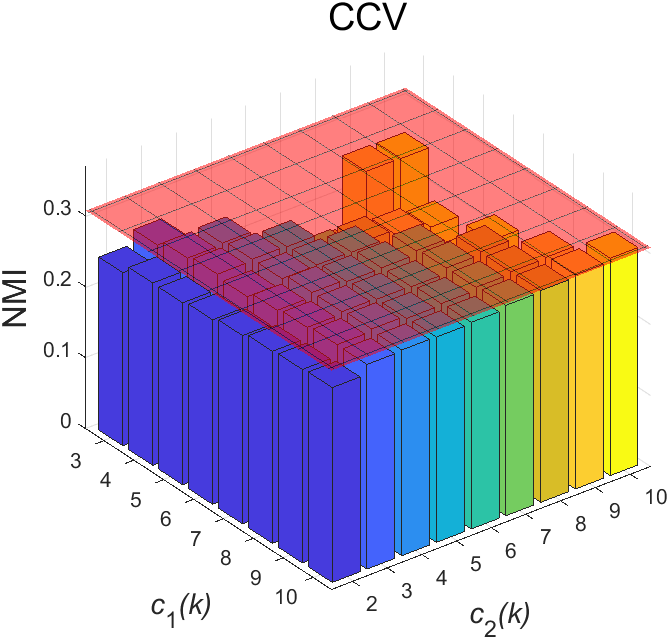}}}\hspace{-2pt}
            \subfloat[ACC]{{\includegraphics[width=0.165\textwidth]{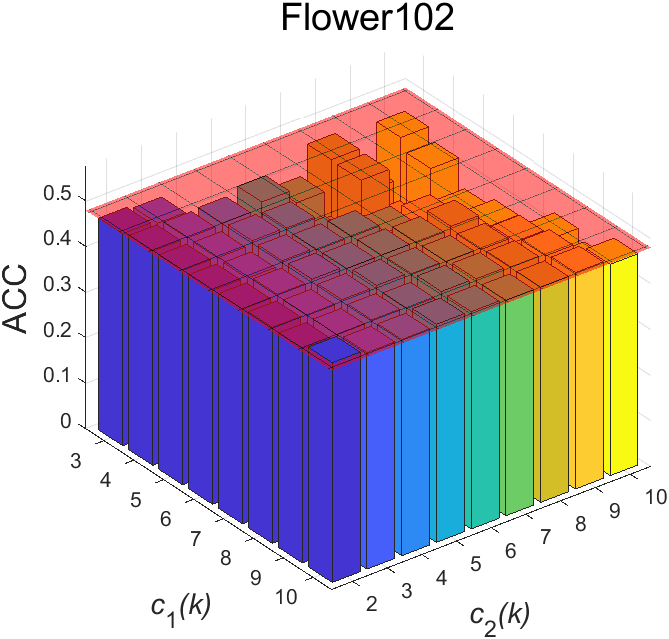}}}\hspace{-2pt}
            \subfloat[NMI]{{\includegraphics[width=0.165\textwidth]{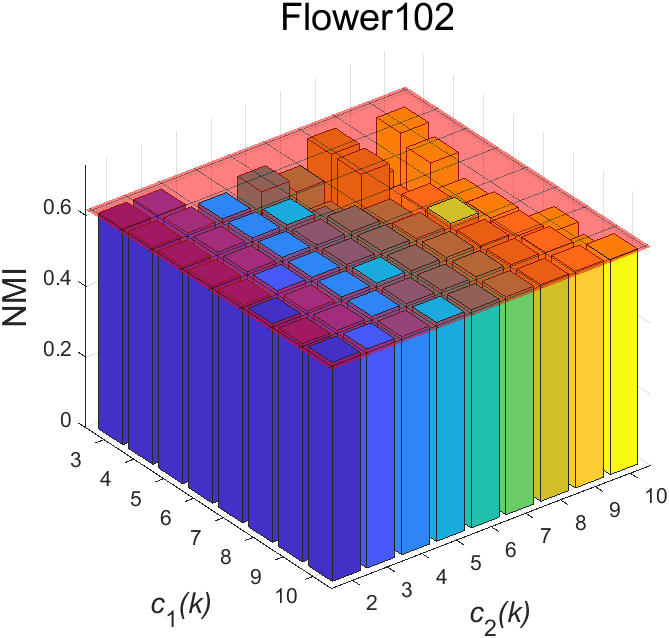}}}
            \hspace{-2pt}
			\caption{{Comparing the strongest competitor HMKC and our algorithm on BBCSport, CCV and Flower102. The red slice denotes the performance of our parameter-free MKC-DNM. The 3-D bar denotes the results of HKMC which involves two hyper-parameters tuning by grid search.}}
			\label{Compare_HKMC}
			}
\end{center}
\vspace{-10pt}
\end{figure*}
\begin{figure}[!h]
\vspace{-10pt}
\begin{center}{
		\centering
            \subfloat[Plant]{{\includegraphics[width=0.217\textwidth]{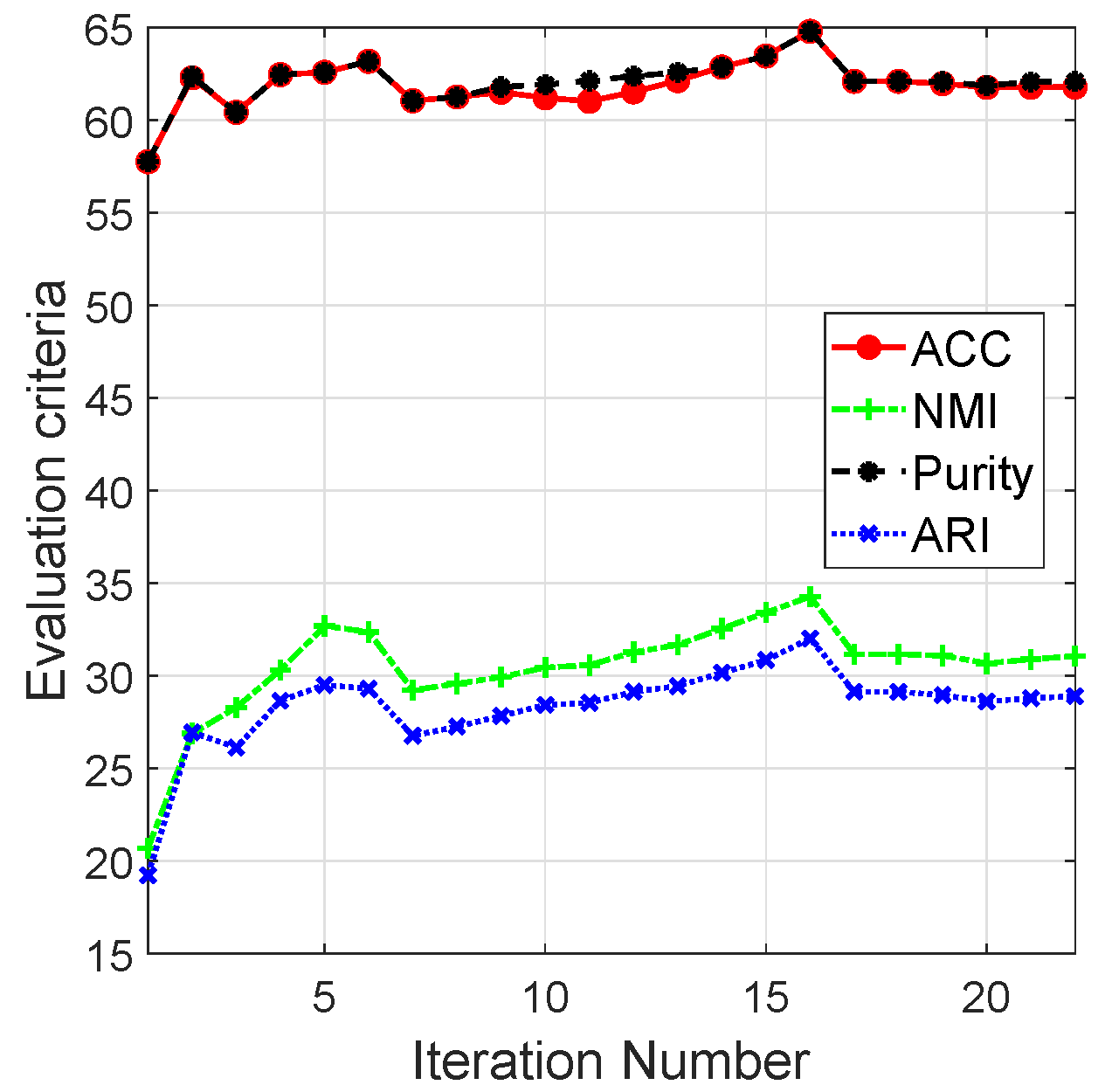}}}\hspace{1mm}
            \subfloat[CCV]{{\includegraphics[width=0.21\textwidth]{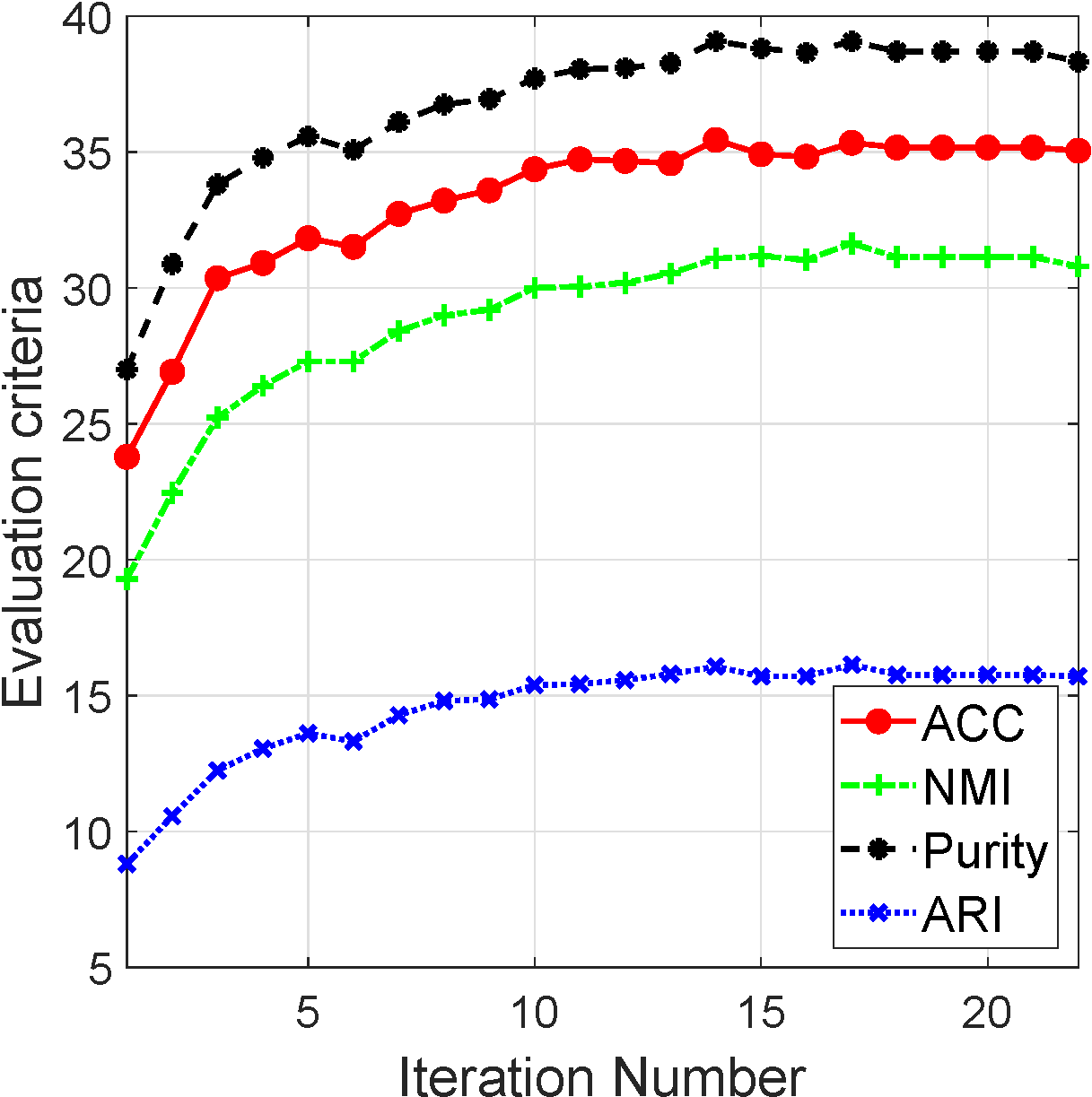}}}\\
            \vspace{-10pt}
            \subfloat[Flower102]{{\includegraphics[width=0.22\textwidth]{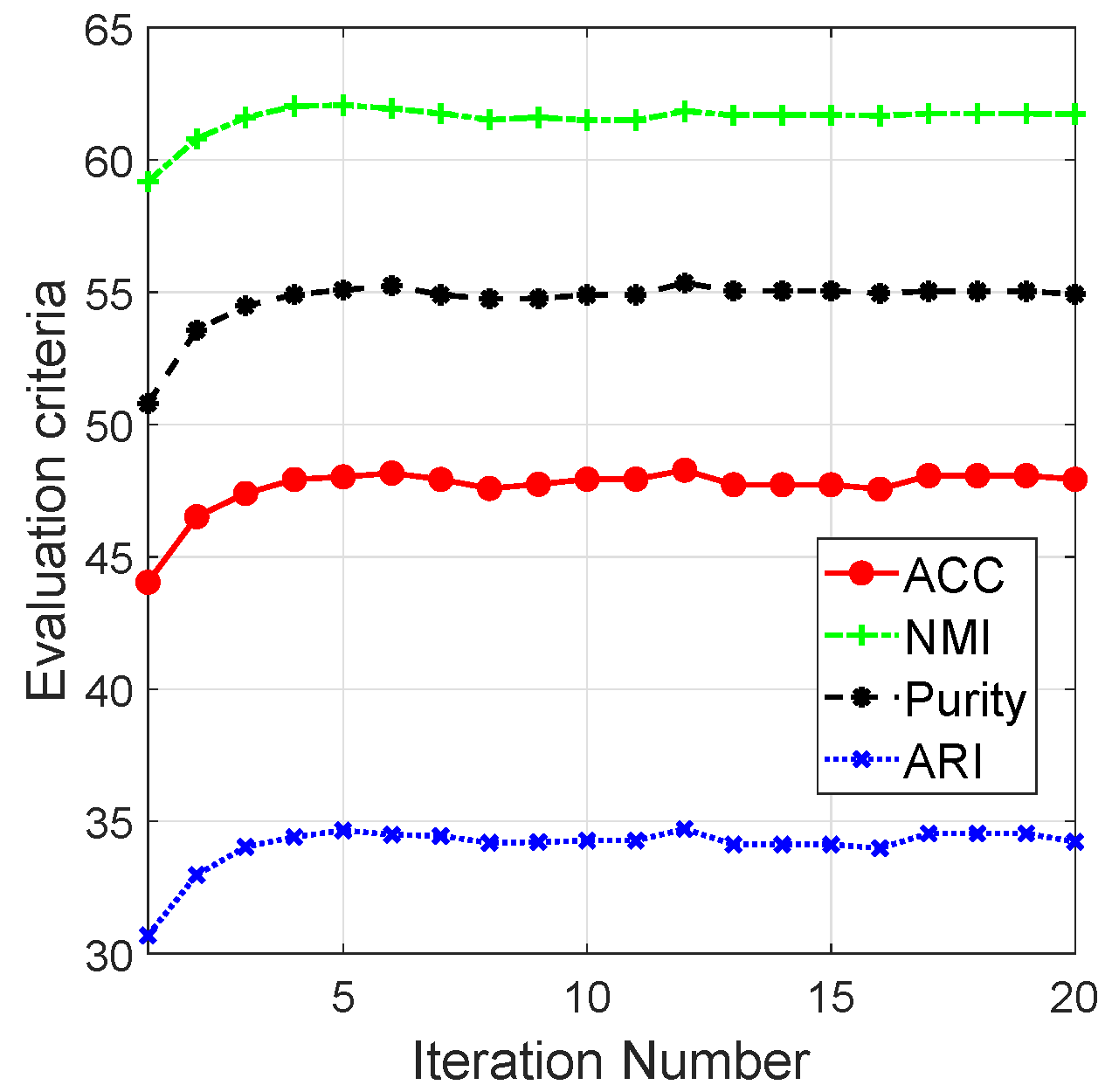}}}\hspace{1mm}
            \subfloat[Reuters]{{\includegraphics[width=0.22\textwidth]{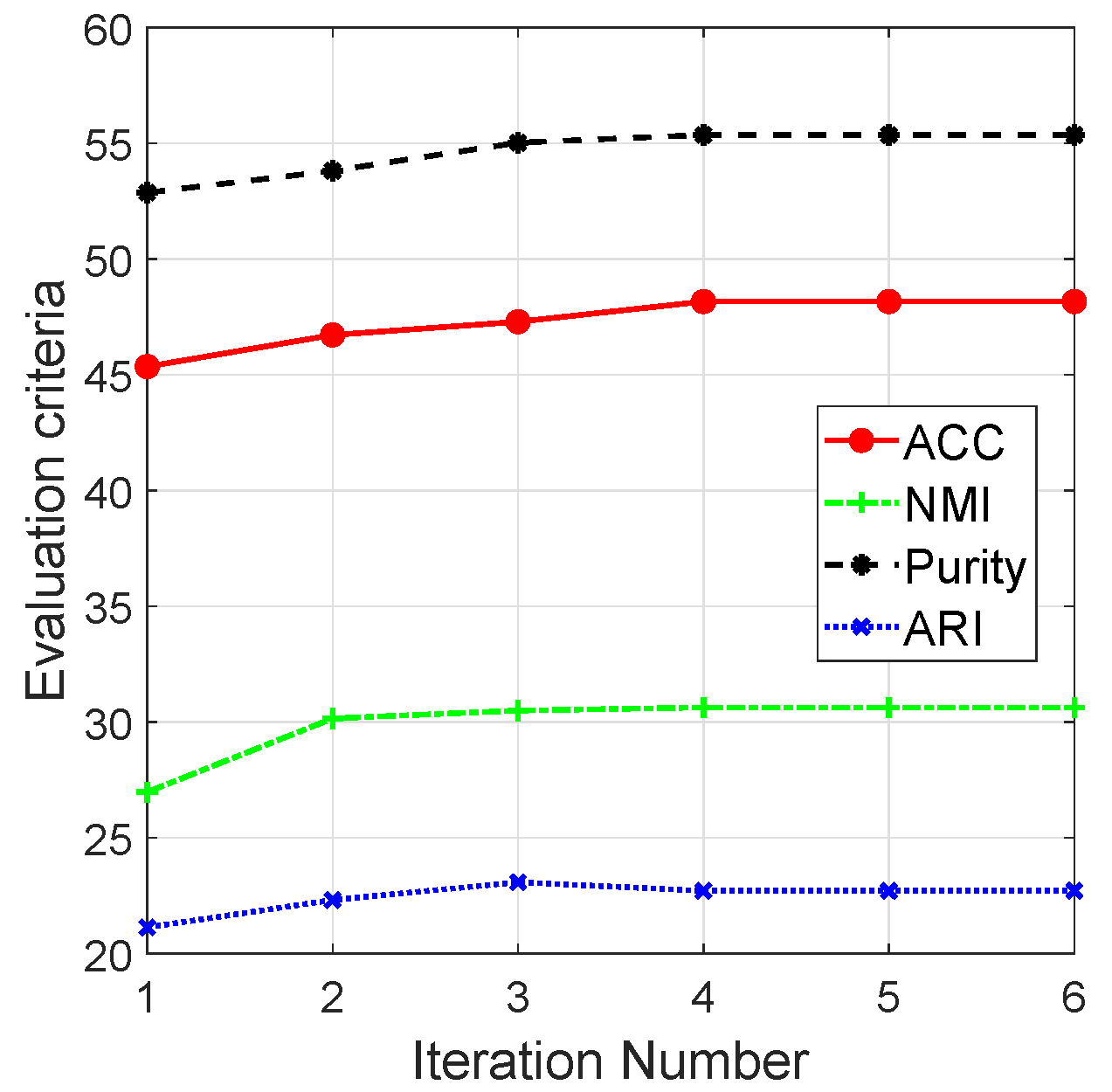}}}
			\caption{{Evolution of clustering performance during the iterations on Plant, CCV, Flower102, and Reuters.}}
            \label{evolution_criterion}
			}
\end{center}
\vspace{-15pt}
\end{figure}
\subsection{Computation Time Comparison}
Table \ref{Time_record} records the comparison of CPU time, including `Tune Time' and `Run Time'. `Tune Time' records the execution time including hyper-parameter tuning. `Run Time' records single running time. `-' denotes a parameter-free model. `N/A' denotes unavailable results due to memory-out or time-out errors. Note that OPLF is parameter-free but requires 20 times computation to reduce the randomness of $k$-means. From the results, we observe that:
\begin{enumerate}[1)]
\item Compared to late fusion based LFMKC, OPLF, and HMKC, our model achieves comparable or shorter `Run Time' than LFMKC and OPLF on most datasets, validating they share linear computational complexity. Not surprisingly, our model requires much less time than HMKC which has $\mathcal{O}(n^{3})$ complexity. Since the compared ones involve hyper-parameters or repeatably computation, they need much more `Tune Time' than our MKC-DNM.  
\item Compared to parameter-free kernel fusion based AKKM and MKKM with $\mathcal{O}(n^{3})$ complexity, although our model with $\mathcal{O}(n)$ complexity requires more time on several datasets, which is mainly due to our more complex optimization, we believe that more computational cost is a worthwhile sacrifice for much better clustering performance.
\item Compared to other methods with $\mathcal{O}(n^{3})$ complexity, our model exhibits significant superiority in effectiveness and efficiency. 
\end{enumerate}
\subsection{Comparison to Empirical HKMC Method}
To further compare our MKC-DNM algorithm and the strongest competitor HKMC method. Figure \ref{Compare_HKMC} plots the results of ours and HKMC on BBCSport, CCV and Flower102. 

Following the original hyper-parameter setting of \cite{liu2021hierarchical}, i.e. $c_{1}$ varies in [3k,\;4k,\;$\cdots$,\;10k] and $c_{2}$ varies in [2k,\;3k,\;$\cdots$,\;10k] with total 72 times computation by grid search. As the result shows, our parameter-free MKC-DNM surpasses HKMC with an obvious margin in the wide searching region. Although the proposed MKC-DNM may exhibit slight weaker performance at several searching regions, we emphasize that our model does not require time-consuming hyper-parameter tuning but achieves promising performance, which directly serves for unsupervised clustering.  
\subsection{Evolution of Clustering Performance}
To evaluate the effectiveness of our learning strategy to minimize dual noise within partition matrices in kernel space. Figure \ref{evolution_criterion} plots the evolution of ACC, NMI, purity, and ARI on Plant, CCV, Flower102, and Reuters. As can be seen, four metrics increase and keep stable during the iterations, which sufficiently illustrates the effectiveness of our algorithm.  
\section{Conclusion}\label{Se_Conclusion}
This work investigates an essential issue that how to minimize the noise inside the partition matrix. In this paper, we propose a novel parameter-free MKC algorithm with dual noise minimization to tackle this issue. Specifically, we first rigorously define the dual noise mathematically and separate it into dual parts, i.e. N-noise and C-noise, then we develop a unified and compact framework to minimize them. We design an efficient two-step iterative strategy to solve the model. To our best knowledge, it is the first time to investigate dual noise within the partition in the kernel space, distinguishing our work from existing researches. We observe that dual noise will pollute the block diagonal structures and incur the degeneration of clustering performance, and C-noise exhibits stronger destruction than N-noise. Minimizing dual noise significantly improves the clustering performance. Extensive experiments illustrate the effectiveness and efficiency of our proposed algorithm compared to the existing methods. 
\section{Acknowledgments}
This work was supported by the National Key R\&D Program of China 2020AAA0107100 and the National Natural Science Foundation of China (project no. 61773392, 61872377, 61922088 and 61976196).
\bibliographystyle{unsrt}
\bibliography{Reference}

\clearpage
\appendix
\section{Proof of Lemma }
Recall the definition of dual noise, i.e.
\begin{align}
&v\left(\mathbf{E}_p ^\mathrm{N}\right) \in N\left({\mathbf{H}^\ast}^\top\right) = \left\{ \mathbf{x} \;|\; {\mathbf{H}^\ast}^\top \mathbf{x} = \mathbf{0}\right\},\;\; \forall p \in \Delta_m, \label{eq:EN_appendix}\\
&v\left(\mathbf{E}_p ^\mathrm{C}\right) \in C\left(\mathbf{H}^\ast\right) = \left\{ \mathbf{y} \;|\; \exists \; \mathbf{x}, \; \mathrm{s.t.}\; \mathbf{y} = \mathbf{H}^\ast \mathbf{x}\right\},\;\; \forall p \in \Delta_m, \label{eq:EC_appendix}
\end{align}
where $\Delta_m = \{1,2,\cdots,m\}$. 
\subsection{Proof of Lemma 1}
\begin{lemma}\label{lemma1_appendix}
$\mathrm{Tr} \left( \mathbf{E}_p ^\mathrm{N} \mathbf{H}^\ast {\mathbf{H}^\ast}^\top \right) = 0$,\;\; $\forall p \in \Delta_m$.
\end{lemma}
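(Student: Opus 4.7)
The plan is to exploit the spectral decomposition of $\mathbf{E}_p^\mathrm{N}$ together with the defining property that its eigenvectors (with nonzero eigenvalues) lie in $N({\mathbf{H}^\ast}^\top)$. By Eq. (\ref{eq:EN_appendix}), any such eigenvector $\mathbf{v}$ satisfies ${\mathbf{H}^\ast}^\top \mathbf{v} = \mathbf{0}$, so once we can rewrite the trace as a sum of squared norms of vectors of the form ${\mathbf{H}^\ast}^\top \mathbf{v}_i$, the claim follows immediately.

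Concretely, I would first write
\[
\mathbf{E}_p^\mathrm{N} = \sum_{i} \lambda_i \mathbf{v}_i \mathbf{v}_i^\top,
\]
where $\{\mathbf{v}_i\}$ are the eigenvectors associated with nonzero eigenvalues $\lambda_i$ (the zero-eigenvalue terms contribute nothing to the trace and can be discarded). Substituting this into $\mathrm{Tr}(\mathbf{E}_p^\mathrm{N}\mathbf{H}^\ast{\mathbf{H}^\ast}^\top)$ and using linearity of the trace gives
\[
\mathrm{Tr}\!\left(\mathbf{E}_p^\mathrm{N}\mathbf{H}^\ast{\mathbf{H}^\ast}^\top\right) = \sum_i \lambda_i \, \mathrm{Tr}\!\left(\mathbf{v}_i \mathbf{v}_i^\top \mathbf{H}^\ast {\mathbf{H}^\ast}^\top\right).
\]
I would then apply the cyclic property of the trace to rewrite each summand as $\mathrm{Tr}({\mathbf{H}^\ast}^\top \mathbf{v}_i \mathbf{v}_i^\top \mathbf{H}^\ast) = \|{\mathbf{H}^\ast}^\top \mathbf{v}_i\|_2^2$, and invoke Eq. (\ref{eq:EN_appendix}) to conclude each norm is zero.

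There is essentially no obstacle here: the result is a direct consequence of the null-space characterization of $\mathbf{E}_p^\mathrm{N}$ combined with trace manipulations. The only subtlety worth being explicit about is that a spectral decomposition of $\mathbf{E}_p^\mathrm{N}$ is meaningful because Lemma \ref{lemma3} (proved separately) guarantees $\mathbf{E}_p^\mathrm{N}$ is symmetric (PSD); alternatively, one can avoid this dependence by observing that the definition in Eq. (\ref{eq:EN_appendix}) already provides the required eigenbasis. Either route yields a short, self-contained argument of two or three lines.
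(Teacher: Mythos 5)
Your proof is correct and follows essentially the same route as the paper's: spectral decomposition of $\mathbf{E}_p^\mathrm{N}$, linearity and cyclicity of the trace, and the null-space property ${\mathbf{H}^\ast}^\top \mathbf{v}_i = \mathbf{0}$ from Eq.~(\ref{eq:EN_appendix}). Your added remark that the spectral decomposition presupposes symmetry of $\mathbf{E}_p^\mathrm{N}$ is a sound observation the paper leaves implicit, but it does not change the argument.
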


\begin{proof}
The spectral decomposition of $\mathbf{E}_p ^\mathrm{N}$ is formulated as 
\begin{equation}\label{}
\begin{split}
\mathbf{E}_p ^\mathrm{N} = \sum_i \lambda_i v_i v_i ^\top.
\end{split}
\end{equation}
where $v_i$ is the unit eigenvector with corresponding eigenvalue $\lambda_i$.

According to Eq. (\ref{eq:EN_appendix}), we have $\lambda_i {\mathbf{H}^\ast}^\top v_i = 0$. We can further derive that:
\begin{equation}\label{}
\begin{split}
&\mathrm{Tr} \left( \mathbf{E}_p ^\mathrm{N} \mathbf{H}^\ast {\mathbf{H}^\ast}^\top \right) = \mathrm{Tr} \left( \sum_i \lambda_i v_i v_i ^\top \mathbf{H}^\ast {\mathbf{H}^\ast}^\top \right) \\
=& \mathrm{Tr} \left( \sum_i v_i ^\top \mathbf{H}^\ast \left(\lambda_i {\mathbf{H}^\ast}^\top v_i \right) \right) = 0.
\end{split}
\end{equation}
This completes the proof.
\end{proof}
\subsection{Proof of Lemma 2}
\begin{lemma}\label{lemma2_appendix}
$\mathrm{Tr} \left( \mathbf{E}_p ^\mathrm{C} \mathbf{H}^\ast {\mathbf{H}^\ast} ^\top \right) = \mathrm{Tr}\left( \mathbf{E}_p ^\mathrm{C}\right)$,\;\; $\forall p \in \Delta_m$.
\end{lemma}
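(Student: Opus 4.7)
The plan is to mimic the proof of Lemma 1 but exploit the dual geometric condition: whereas eigenvectors of $\mathbf{E}_p^\mathrm{N}$ are annihilated by ${\mathbf{H}^\ast}^\top$, eigenvectors of $\mathbf{E}_p^\mathrm{C}$ lie in $C(\mathbf{H}^\ast)$ and therefore should be \emph{fixed} by the orthogonal projector $\mathbf{H}^\ast {\mathbf{H}^\ast}^\top$. Since $\mathbf{H}^{\ast\top} \mathbf{H}^\ast = \mathbf{I}_k$, the matrix $\mathbf{P} := \mathbf{H}^\ast {\mathbf{H}^\ast}^\top$ is an idempotent orthogonal projection onto $C(\mathbf{H}^\ast)$, so any $v \in C(\mathbf{H}^\ast)$ satisfies $\mathbf{P} v = v$. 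That single identity is the engine of the whole argument.

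First, I would write a spectral decomposition $\mathbf{E}_p^\mathrm{C} = \sum_i \lambda_i v_i v_i^\top$ with unit eigenvectors $v_i$ corresponding to nonzero eigenvalues $\lambda_i$. By Eq.~(\ref{eq:EC_appendix}), each such $v_i \in C(\mathbf{H}^\ast)$, so there exists $x_i$ with $v_i = \mathbf{H}^\ast x_i$. Then
\begin{equation*}
\mathbf{P} v_i = \mathbf{H}^\ast {\mathbf{H}^\ast}^\top \mathbf{H}^\ast x_i = \mathbf{H}^\ast \mathbf{I}_k x_i = \mathbf{H}^\ast x_i = v_i.
\end{equation*}
Second, I would substitute and use cyclicity of trace:
\begin{equation*}
\mathrm{Tr}\bigl(\mathbf{E}_p^\mathrm{C}\, \mathbf{H}^\ast {\mathbf{H}^\ast}^\top\bigr)
= \sum_i \lambda_i\, \mathrm{Tr}\bigl(v_i v_i^\top \mathbf{P}\bigr)
= \sum_i \lambda_i\, v_i^\top \mathbf{P} v_i
= \sum_i \lambda_i\, v_i^\top v_i = \sum_i \lambda_i,
\end{equation*}
and recognize the right-hand side as $\mathrm{Tr}(\mathbf{E}_p^\mathrm{C})$, completing the claim.

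I do not anticipate a genuine obstacle here; the argument is essentially one line once one recalls that $\mathbf{H}^\ast {\mathbf{H}^\ast}^\top$ acts as the identity on its column space. The only minor subtlety worth spelling out is that zero eigenvalues contribute nothing to either side, so restricting to nonzero-eigenvalue eigenvectors (which is exactly how $v(\mathbf{E}_p^\mathrm{C})$ is defined in the paper) is legitimate. This, together with Lemma~1, then cleanly sets up Lemma~3 and the trace-based minimization rationale used in Sections \ref{Minimizing-C} and \ref{Minimizing-N}.
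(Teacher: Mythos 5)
Your proposal is correct and follows essentially the same route as the paper's proof: both take the spectral decomposition $\mathbf{E}_p^\mathrm{C} = \sum_i \lambda_i v_i v_i^\top$, use $v_i = \mathbf{H}^\ast x_i$ together with $\mathbf{H}^{\ast\top}\mathbf{H}^\ast = \mathbf{I}_k$, and reduce the trace to $\sum_i \lambda_i$. Your phrasing via the projector identity $\mathbf{H}^\ast {\mathbf{H}^\ast}^\top v_i = v_i$ is just a repackaging of the paper's step ${\mathbf{H}^\ast}^\top v_i = x_i$ with $x_i^\top x_i = 1$.
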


\begin{proof}
The spectral decomposition of $\mathbf{E}_p ^\mathrm{C}$ is formulated as 
\begin{equation}\label{}
\begin{split}
\mathbf{E}_p ^\mathrm{C} = \sum_i \lambda_i v_i v_i ^\top.
\end{split}
\end{equation}
where $v_i$ is the unit eigenvector with corresponding eigenvalue $\lambda_i$.

According to Eq. (\ref{eq:EC_appendix}), we have $v_i = \mathbf{H}^\ast x_i$, where $x_i$ is a unit vector. We can further derive that: 
\begin{equation}\label{}
\begin{split}
&\mathrm{Tr} \left( \mathbf{E}_p ^\mathrm{C} \mathbf{H}^\ast {\mathbf{H}^\ast} ^\top \right) = \mathrm{Tr} \left( \sum_i \lambda_i v_i v_i ^\top \mathbf{H}^\ast {\mathbf{H}^\ast}^\top \right) \\
=& \mathrm{Tr} \left( \sum_i \lambda_i \left( {\mathbf{H} ^\ast}^\top v_i \right) ^\top \left( {\mathbf{H} ^\ast}^\top v_i \right) \right) \\
= &\mathrm{Tr} \left( \sum_i \lambda_i x_i ^\top x_i \right) = \sum_i \lambda_i = \mathrm{Tr}\left( \mathbf{E}_p ^\mathrm{C}\right).
\end{split}
\end{equation}
This completes the proof.
\end{proof}
\subsection{Proof of Lemma 3}
\begin{lemma}\label{lemma3_appendix}
$\mathbf{E}_p ^\mathrm{N}$ is positive semi-definite (PSD) and $\mathbf{E}_p ^\mathrm{C}$ is negative semi-definite (NSD).
\end{lemma}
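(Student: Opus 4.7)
The plan is to prove both claims via a Rayleigh-quotient argument on the eigenvectors whose existence is built into the definitions of $\mathbf{E}_p^\mathrm{N}$ and $\mathbf{E}_p^\mathrm{C}$. The key structural fact I would exploit up front is that $C(\mathbf{H}^\ast)$ and $N({\mathbf{H}^\ast}^\top)$ are mutually orthogonal and together span $\mathbb{R}^n$, since $\mathbf{H}^{\ast\top}\mathbf{H}^\ast=\mathbf{I}_k$. Because $\mathbf{E}_p^\mathrm{N}$ and $\mathbf{E}_p^\mathrm{C}$ are symmetric (they are the two components of the symmetric matrix $\mathbf{E}_p=\mathbf{U}_p\mathbf{U}_p^\top-\mathbf{H}^\ast{\mathbf{H}^\ast}^\top$), the eigenvectors of each with nonzero eigenvalue live in one of these two subspaces, and the remaining eigenvectors (with zero eigenvalue) occupy the orthogonal complement. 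I would use this to conclude the cross-annihilation facts $\mathbf{E}_p^\mathrm{N} v=\mathbf{0}$ for every $v\in C(\mathbf{H}^\ast)$ and $\mathbf{E}_p^\mathrm{C} w=\mathbf{0}$ for every $w\in N({\mathbf{H}^\ast}^\top)$.

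With that in hand, for the PSD claim on $\mathbf{E}_p^\mathrm{N}$ I would take any eigenvector $v$ of $\mathbf{E}_p^\mathrm{N}$ with nonzero eigenvalue $\lambda$, so that $v\in N({\mathbf{H}^\ast}^\top)$. The cross-annihilation gives $\mathbf{E}_p^\mathrm{C} v=\mathbf{0}$, while $\mathbf{H}^\ast{\mathbf{H}^\ast}^\top v=\mathbf{0}$ directly from $v\in N({\mathbf{H}^\ast}^\top)$. Substituting into $\mathbf{U}_p\mathbf{U}_p^\top=\mathbf{H}^\ast{\mathbf{H}^\ast}^\top+\mathbf{E}_p^\mathrm{N}+\mathbf{E}_p^\mathrm{C}$ yields $\lambda v=\mathbf{U}_p\mathbf{U}_p^\top v$, and pairing with $v$ gives $\lambda\|v\|^2=\|\mathbf{U}_p^\top v\|_2^2\ge 0$, so $\lambda\ge 0$.

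For the NSD claim on $\mathbf{E}_p^\mathrm{C}$ I would take an eigenvector $v$ with nonzero eigenvalue $\lambda$, so $v\in C(\mathbf{H}^\ast)$ and $v=\mathbf{H}^\ast x$ for some $x$. Using $\mathbf{H}^{\ast\top}\mathbf{H}^\ast=\mathbf{I}_k$ gives $\mathbf{H}^\ast{\mathbf{H}^\ast}^\top v=v$, and cross-annihilation gives $\mathbf{E}_p^\mathrm{N} v=\mathbf{0}$. Substituting into the same identity produces $\lambda v=(\mathbf{U}_p\mathbf{U}_p^\top-\mathbf{I}_n)v$, so $\lambda\|v\|^2=\|\mathbf{U}_p^\top v\|_2^2-\|v\|^2$. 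Since $\mathbf{U}_p^\top\mathbf{U}_p=\mathbf{I}_{d_p}$ makes $\mathbf{U}_p\mathbf{U}_p^\top$ an orthogonal projector, it satisfies $\mathbf{U}_p\mathbf{U}_p^\top\preceq\mathbf{I}_n$, and the right-hand side is nonpositive, forcing $\lambda\le 0$.

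The main obstacle is justifying the cross-annihilation cleanly: the definitions only constrain where the eigenvectors with nonzero eigenvalue sit, so I need to argue that symmetry of $\mathbf{E}_p^\mathrm{N}$ and $\mathbf{E}_p^\mathrm{C}$, together with the orthogonal direct-sum decomposition $\mathbb{R}^n=C(\mathbf{H}^\ast)\oplus N({\mathbf{H}^\ast}^\top)$, forces the remaining spectrum to be zero on the complementary subspace. Once this is established, the PSD/NSD conclusions are immediate Rayleigh-quotient computations; the only external input is that $\mathbf{U}_p\mathbf{U}_p^\top$ is an orthogonal projection, which is the same structural fact used implicitly throughout the paper.
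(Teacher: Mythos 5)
Your proposal is correct and follows essentially the same route as the paper's own proof: both rest on the orthogonal decomposition $\mathbb{R}^n = C(\mathbf{H}^\ast)\oplus N({\mathbf{H}^\ast}^\top)$, the cross-annihilation identities $\mathbf{E}_p^\mathrm{N}\mathbf{x}_\mathrm{C}=\mathbf{0}$ and $\mathbf{E}_p^\mathrm{C}\mathbf{x}_\mathrm{N}=\mathbf{0}$, the identity $\mathbf{U}_p\mathbf{U}_p^\top=\mathbf{H}^\ast{\mathbf{H}^\ast}^\top+\mathbf{E}_p^\mathrm{N}+\mathbf{E}_p^\mathrm{C}$, and the bound $\mathbf{U}_p\mathbf{U}_p^\top\preceq\mathbf{I}_n$ from $\mathbf{U}_p^\top\mathbf{U}_p=\mathbf{I}_{d_p}$. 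The only (immaterial) difference is that you phrase the conclusion as a sign condition on eigenvalues while the paper evaluates the quadratic form $\mathbf{x}^\top\mathbf{E}_p\mathbf{x}$ on an arbitrary decomposed vector, and you are in fact slightly more careful than the paper in that you explicitly justify the cross-annihilation step via symmetry and the spectral decomposition rather than asserting it.
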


\begin{proof}
For all $\mathbf{x} \in \mathbb{R}^n$, we separate it into two parts: 
\begin{equation}\label{}
\begin{split}
\mathbf{x} = \mathbf{x}_\mathrm{N} + \mathbf{x}_\mathrm{C},
\end{split}
\end{equation}
where $\mathbf{x}_\mathrm{N} \in N\left( {\mathbf{H} ^\ast}^\top \right)$ and $\mathbf{x}_\mathrm{C} \in C\left( {\mathbf{H} ^\ast} \right)$ are existing and unique. According to Eq. (\ref{eq:EC_appendix}), we have $\mathbf{x}_\mathrm{C} = \mathbf{H} ^\ast \mathbf{y}$, where $\|\mathbf{y}\|_2^2 = \|\mathbf{x}_\mathrm{C}\|_2^2$. Then we have 
\begin{equation}\label{}
\begin{split}
{\mathbf{H} ^\ast}^\top \mathbf{x}_\mathrm{N} = \mathbf{0}, \;\mathbf{E}_p ^\mathrm{C} \mathbf{x}_\mathrm{N} = \mathbf{0}, \;\mathbf{E}_p ^\mathrm{N} \mathbf{x}_\mathrm{C} = \mathbf{0}. 
\end{split}
\end{equation}
We can further derive that: 
\begin{equation}\label{}
\begin{split}
&\mathbf{x}^\top \mathbf{E}_p ^\mathrm{N} \mathbf{x} = \mathbf{x}_\mathrm{N} ^\top \mathbf{E}_p ^\mathrm{N} \mathbf{x}_\mathrm{N} \\
= &\mathbf{x}_\mathrm{N} ^\top \mathbf{U}_p \mathbf{U}_p^\top \mathbf{x}_\mathrm{N} - \mathbf{x}_\mathrm{N} ^\top \left( \mathbf{H}^\ast {\mathbf{H}^\ast}^\top + \mathbf{E}_p^{C} \right) \mathbf{x}_\mathrm{N}\\
= &\mathbf{x}_\mathrm{N} ^\top \mathbf{U}_p \mathbf{U}_p^\top \mathbf{x}_\mathrm{N} \geq 0, 
\end{split}
\end{equation}
and 
\begin{equation}\label{}
\begin{split}
&\mathbf{x}^\top \mathbf{E}_p ^\mathrm{C} \mathbf{x} = \mathbf{x}_\mathrm{C} ^\top \mathbf{E}_p ^\mathrm{C} \mathbf{x}_\mathrm{C} \\
= &\mathbf{x}_\mathrm{C} ^\top \mathbf{U}_p \mathbf{U}_p^\top \mathbf{x}_\mathrm{C} - \mathbf{x}_\mathrm{C} ^\top \left( \mathbf{H}^\ast {\mathbf{H}^\ast}^\top + \mathbf{E}_p^{N} \right) \mathbf{x}_\mathrm{C}\\
= &\mathbf{x}_\mathrm{C} ^\top \mathbf{U}_p \mathbf{U}_p^\top \mathbf{x}_\mathrm{C} - \mathbf{x}_\mathrm{C} ^\top \mathbf{H}^\ast {\mathbf{H}^\ast}^\top \mathbf{x}_\mathrm{C} \\
= &\mathbf{x}_\mathrm{C} ^\top \mathbf{U}_p \mathbf{U}_p^\top \mathbf{x}_\mathrm{C} - \mathbf{y}^\top \mathbf{y}\\
\leq &\mathbf{x}_\mathrm{C} ^\top \mathbf{x}_\mathrm{C} - \mathbf{y}^\top \mathbf{y} = 0. 
\end{split}
\end{equation}
This completes the proof. 
\end{proof}
\end{document}